\documentclass{article}

 \usepackage[preprint]{neurips_2026}

\usepackage{microtype}
\usepackage{graphicx}
\usepackage{subcaption}
\usepackage{booktabs} 

\usepackage{hyperref}

\usepackage[dvipsnames]{xcolor}
\usepackage[normalem]{ulem}
\usepackage{tikz}
\usetikzlibrary{arrows}

\usepackage{bm}
\usepackage{multirow}
\usepackage{booktabs}
\usepackage{graphicx}  

\usepackage{enumitem}

\usepackage[utf8]{inputenc} 
\usepackage[T1]{fontenc}    
\usepackage{hyperref}       
\usepackage{url}            
\usepackage{booktabs}       
\usepackage{amsfonts}       
\usepackage{nicefrac}       
\usepackage{microtype}      
\usepackage{xcolor}         

\usepackage{amsmath}
\usepackage{amssymb}
\usepackage{mathtools}
\usepackage{amsthm}

\usepackage[capitalize,noabbrev]{cleveref}

\theoremstyle{plain}
\newtheorem{theorem}{Theorem}[section]
\newtheorem{proposition}[theorem]{Proposition}
\newtheorem{lemma}[theorem]{Lemma}
\newtheorem{corollary}[theorem]{Corollary}
\theoremstyle{definition}
\newtheorem{definition}[theorem]{Definition}

\newtheorem{example}[theorem]{Example}
\theoremstyle{remark}

\usepackage{wrapfig}

\usepackage[textsize=tiny]{todonotes}

\title{OrthoReg: Orthogonal Regularization for Hybrid Symbolic-Neural Dynamical Systems}

\author{%
  Till Richter \\
  Technical University of Munich \\
  Helmholtz Munich \\
  \texttt{till.richter@helmholtz-munich.de} \\
  \And
  Niki Kilbertus \\
  Technical University of Munich \\
  Helmholtz Munich \\
  \texttt{niki.kilbertus@helmholtz-munich.de} \\
}

\begin{document}

\maketitle


\begin{abstract}
Dynamical systems are fundamental to modeling the natural world, yet modeling them involves a persistent trade-off: 
manually prescribed mechanistic models are interpretable by design but often overly simplistic and misspecified; 
in contrast, flexible data-driven neural methods lack physical insight. 
Hybrid modeling aims for the best of both worlds by combining a prescribed or symbolic, physics-based component with a flexible neural network. 
A critical challenge, however, is that the neural component may relearn mechanistic parts, yielding redundant and uninterpretable models, especially when the symbolic structure itself is discovered from data. 
Existing methods based on standard $L^2$ regularization rely on a projection argument that breaks when the symbolic component is learned through sparse discovery, allowing the neural augmentation to overlap with symbolic structure.
We introduce \textbf{OrthoReg} (Orthogonal Regularization), which directly penalizes overlap between the symbolic and neural components, preventing symbolic structure from being absorbed by the neural residual. 
This yields a complementary decomposition: 
the symbolic part captures what the library can express, and the neural part captures what remains. 
On benchmark dynamical systems with partial library mismatch, OrthoReg improves symbolic recovery and out-of-distribution behavior.

\end{abstract}

\section{Introduction}
Dynamical systems modeling has long been a cornerstone across the sciences, especially for the natural and life sciences.
Applications range from healthcare data~\citep{10.5555/3157382.3157490, hess2024bayesian, Seedat2022Continuous, richter2026generative}, climate modeling~\citep{10.1145/3485128,eyring2024pushing}, and power systems~\citep{Toubeau2018Deep}, just to name a few. 
However, it faces a fundamental trade-off: symbolic models, traditionally specified by hand, provide interpretability by design, but typically cannot capture complex unknown phenomena; 
flexible neural networks instead excel at fitting data from dynamical systems~\citep{chen2018neural} but lack physical insight.

\noindent\makebox[\linewidth][c]{%
\begin{tikzpicture}[font=\scriptsize]
    \shade[left color=BrickRed!25, right color=JungleGreen!25, rounded corners=0.55ex] (0pt, -0.55ex) rectangle (0.95\linewidth, 0.55ex);
    \draw[rounded corners=0.55ex, draw=black!45, line width=0.25pt] (0pt, -0.55ex) rectangle (0.95\linewidth, 0.55ex);

    \fill[BrickRed] (0.55ex, 0pt) circle (0.35ex);
    \fill[black!60] (0.475\linewidth, 0pt) circle (0.35ex);
    \draw[-{latex}, thick, black!60]  (0.475\linewidth, 0pt) - ++(1.3,0);
    \draw[-{latex}, thick, black!60]  (0.475\linewidth, 0pt) - ++(-1.3,0);
    \fill[JungleGreen] (\dimexpr0.95\linewidth-0.55ex\relax, 0pt) circle (0.35ex);

    \node[anchor=south west, font=\scriptsize\bfseries, text=BrickRed] at (0.55ex, 0.7ex) {mechanistic};
    \node[anchor=south, font=\scriptsize\bfseries] (H) at (0.475\linewidth, 0.7ex) {hybrid};
    \node[anchor=south east, font=\scriptsize\bfseries, text=JungleGreen] at (\dimexpr0.95\linewidth-0.55ex\relax, 0.7ex) {data-driven};

    \node[anchor=north west, font=\tiny, align=left, text=BrickRed, text width=0.46\linewidth] at (0pt, -0.9ex) {fully prescribed, interpretable by design,\\ no data required, usually simple,\\ assumed to be the correct mechanistic system description};
    \node[anchor=north east, font=\tiny, align=right, text=JungleGreen, text width=0.46\linewidth] at (0.95\linewidth, -0.9ex) {fully data driven,\\complex black-box model,\\ approximate system behavior within data support};
\end{tikzpicture}%
}

Hybrid modeling becomes essential when domain knowledge is partial: 
epidemiological models may capture core transmission dynamics but miss behavioral feedback mechanisms that modulate contact rates; 
mechanical systems follow known laws of motion but exhibit complex friction and damping effects not easily expressible in closed form; 
climate models encode fundamental physics but require data-driven corrections for sub-grid processes. 
In these scenarios, a purely symbolic approach underperforms due to missing phenomena, while purely neural models sacrifice the interpretability and physical consistency that domain experts require for scientific insight and decision-making.
Hybrid modeling approaches~\citep{rackauckas2020universal,yin2021augmenting,zou2024hybrid} combine physical priors (predetermined symbolic or parametrized expressions) with learned neural corrections expected to capture phenomena that are unknown or too complex to model directly.
They promise the best of both worlds, but still require substantial prior knowledge in crafting the mechanistic part.
This work focuses on sparse library-based discovery of the symbolic component, as in SINDy-style methods~\citep{brunton2016discovering}. 
In this regime, the symbolic component is fitted with a continuous sparsity penalty (e.g.\ $L^1$ regularisation), where coefficient shrinkage leaves in-library residuals -- the practically relevant case our analysis targets. 
Our goal is to make the symbolic and neural components complementary under the empirical inner product.

In their landmark paper, \citet{yin2021augmenting} present the APHYNITY framework for hybrid dynamical systems modeling when the symbolic structure (but not exact parameter values) is known a priori. 
Like a large body of related existing work~\citep{rackauckas2020universal,mouli2024metaphysica}, APHYNITY decomposes the (autonomous) vector field of an ordinary differential equation (ODE) as $f=f_{\mathrm{phy}}+f_{\mathrm{aug}}$, where $f_{\mathrm{phy}} \in \mathcal{F}_{\mathrm{phy}} = \operatorname{span}\{\phi_j\}_{j=1}^M$ captures dynamics within a predetermined library of ``symbolic'' functions $\{\phi_j\}_{j=1}^{M}$ (e.g., polynomials, trigonometric functions), while $f_{\mathrm{aug}}$ is supposed to capture the residual dynamics via flexible neural networks.
When the symbolic structure is fixed and $\mathcal{F}_{\mathrm{phy}}$ is a closed linear subspace, the two components are provably separated via simple $L^2$ regularization of $f_{\mathrm{aug}}$ (\cref{prop:fixed_symbolic}). 
In this setting, optimizing out $f_{\mathrm{aug}}$ reduces to an $L^2$ projection onto the fixed subspace $\mathcal{F}_{\mathrm{phy}}$, so the residual (and hence $f_{\mathrm{aug}}$) is orthogonal to $\mathcal{F}_{\mathrm{phy}}$, which underlies the analysis of \citet{yin2021augmenting}.

\begin{figure}
\centering
\includegraphics[width=\linewidth]{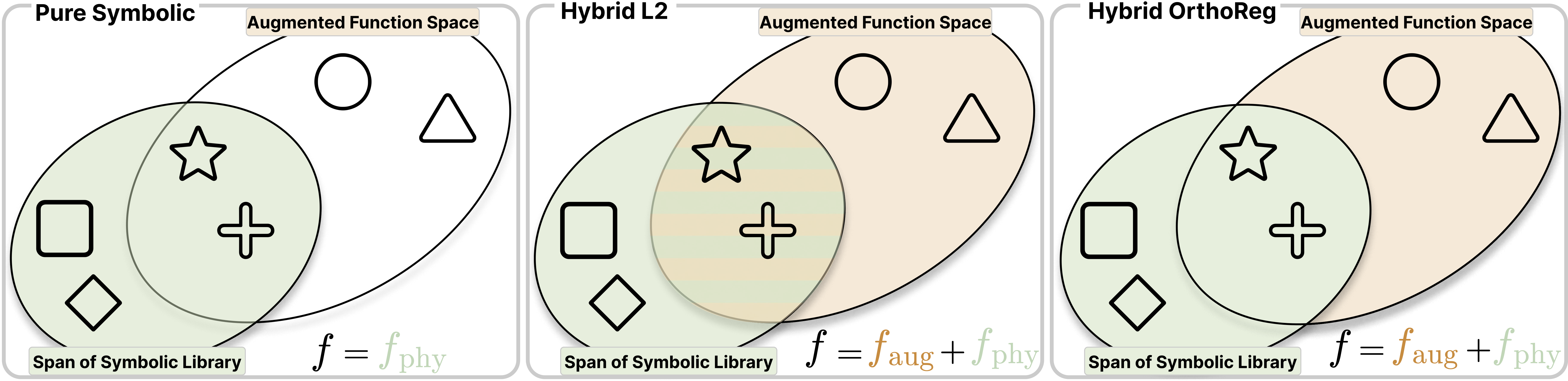}
\caption{\textbf{Symbolic--neural decompositions under library mismatch.}
The symbolic and augmented function spaces can overlap. 
Pure symbolic models are restricted to the symbolic span; standard hybrids may redundantly explain the shared region with both $f_{\mathrm{phy}}$ and $f_{\mathrm{aug}}$; OrthoReg discourages this redundancy by pushing $f_{\mathrm{aug}}$ away from $\mathcal{F}_{\mathrm{phy}}$, yielding a complementary decomposition.}
\label{fig:concept}
\end{figure}

APHYNITY gives a projection-based view of hybrid modeling when the physical model class is fixed: 
the augmentation has a minimal norm subject to fitting the observed dynamics, and in finite-dimensional \(L^2\) subspaces the residual is orthogonal to the physical component. 
This fixed-class picture changes when the symbolic component is learned through gradient-based sparse discovery. 
Continuous sparsity penalties, such as \(L^1\)-regularized library learning, shrink coefficients away from the unregularized projection and can leave residual structure inside the symbolic span. 
A neural augmentation can then relearn symbolic directions even when small in norm. 
OrthoReg targets this regime by directly penalizing empirical overlap between the augmentation and the symbolic library.

\begin{example}[Counterexample: sparsity forces overlap]
\label{ex:intro_sparsity_overlap}
Let $\mathcal{F}$ be a function space with inner product $\langle \cdot, \cdot \rangle_{\mathcal{D}}$ and induced norm $\|\cdot\|_{\mathcal{D}}$.
Let $\phi_1,\phi_2\in\mathcal{F}$ be orthonormal w.r.t.\ $\langle \cdot, \cdot \rangle_{\mathcal{D}}$ and $f=\phi_1+\phi_2$.
Consider
\begin{equation*}
\min_{w\in\mathbb{R}^2,\,g\in\mathcal{F}}\ \|f-(w_1\phi_1+w_2\phi_2+g)\|_{\mathcal{D}}^2+\lambda_2\|g\|_{\mathcal{D}}^2+\mu\|w\|_1.
\end{equation*}
If $0<\mu<\tfrac{2\lambda_2}{1+\lambda_2}$, the unique minimizer satisfies $w_1=w_2=1-\tfrac{\mu(1+\lambda_2)}{2\lambda_2}$ and $g=\tfrac{\mu}{2\lambda_2}(\phi_1+\phi_2)$, hence $g\not\perp \operatorname{span}\{\phi_1,\phi_2\}$ although $\operatorname{span}\{\phi_1,\phi_2\}$ contains $f$.
\end{example}

Thus, hybrid dynamical models that include symbolic discovery require explicit mechanisms to prevent neural-symbolic overlap.
\Cref{fig:concept} illustrates this failure mode and how OrthoReg restores a clean symbolic--neural split.

In this work, we introduce a theoretically grounded and practically effective method to learn hybrid dynamical systems, where the mechanistic component is discovered from data via symbolic regression while ensuring that the residual neural component remains orthogonal to the symbolic part.
The key contribution is to bridge the gap between theoretical projection theory arguments and practical optimization to leverage both symbolic and data-driven modeling.
Concretely, we provide:
\begin{itemize}[leftmargin=*,itemsep=0pt]
    \item \textbf{Theoretical analysis} of symbolic--neural overlap under sparse library-based discovery: $L^2$ regularization can induce overlap between the two components (\cref{thm:l2_failure}), whereas OrthoReg yields a clean in-library / orthogonal-complement split under the empirical inner product (\cref{thm:error_decomp}).
    \item An \textbf{algorithmic solution} realizing the orthogonality constraint within the joint sparse-discovery objective.
    \item \textbf{Empirical validation} on benchmark dynamical systems with partial library mismatch, showing improved out-of-distribution generalization and sparse symbolic recovery.
    \footnote{Code: \url{https://github.com/richtertill/OrthoReg}.}
\end{itemize}

\section{Related Work}
We review data-driven methods for learning governing dynamics and the interpretability--expressiveness trade-off motivating OrthoReg.

\paragraph{(Dynamic) symbolic regression.}
Symbolic regression recovers closed-form expressions from data using genetic programming \citep{koza1994genetic,schmidt2009distilling}, neural-guided search \citep{petersen2021deep,udrescu2020ai}, or sparse library-based regression such as SINDy \citep{brunton2016discovering}. 
Sparse-library methods are highly interpretable when the true dynamics lie in the candidate set, but their expressiveness is limited by library design. 
Recent work improves symbolic discovery through physical constraints such as unit consistency \citep{tenachi2023deep}, transformer-based symbolic generation \citep{lample2019deep,biggio2021neural,valipour2021symbolicgpt,kamienny2022endtoend,vastl2024symformer}, extensions to ODEs and trajectory data \citep{becker2023predicting,d2023odeformer,sun2023symbolic}, and methods for noisy, sparse, distributional, high-dimensional, or guided discovery settings \citep{qian2022d,dakhmouche2025robust,liang2025finite,tian2025interactive,hu2025learning}. 
However, when relevant effects are not representable by a compact symbolic library, purely symbolic models either fail or lose interpretability, motivating hybrid decompositions with a symbolic core and a flexible residual.

\paragraph{Physics-informed neural networks.}
PINNs \citep{raissi2019physics} embed differential equations as soft constraints and Universal ODEs \citep{rackauckas2020universal} parameterize unknown vector-field components with neural networks; 
surveys \citep{cuomo2022scientific,hao2022physics} position these as central scientific-ML paradigms. 
Both rely on a fixed symbolic part and do not address overlap when the symbolic component is itself discovered under sparsity.

\paragraph{Neural, symbolic, and hybrid methods.}
Hybrid methods combine symbolic interpretability with neural expressiveness.
Gray-box discovery has been studied for PDEs via sparse regression \citep{rudy2017data} and physics-informed symbolic objectives \citep{chen2021physics,KIYANI2023116258}.
For ODEs, APHYNITY formalizes $f=f_{\mathrm{phy}}+f_{\mathrm{aug}}$ but assumes fixed symbolic structure \citep{yin2021augmenting}; related work alternates neural learning with symbolic distillation \citep{grigorian2024hybrid} or studies PINN/PIKAN optimization for gray-box identification \citep{daryakenari2025representation}.
In related feedforward-control and nonlinear system-identification work, orthogonality between prescribed physics-based and learned additive components is enforced explicitly, either via a projection-based regularizer \citep{kon2022physics,gyorok2025orthogonal} or by an orthogonal-by-construction parametrization \citep{gyorok2026orthogonal}.
Pure neural ODE models can encode sparsity \citep{aliee2022sparsity}, conservation/manifold constraints \citep{greydanus2019hamiltonian,matsubara2023finde,white2023stabilized}, or meta-learned physics--ML trade-offs \citep{mouli2024metaphysica}, but generally lack explicit symbolic recovery.
In contrast, OrthoReg targets the overlap problem that arises when the symbolic component is itself \emph{discovered} jointly with a neural residual.

\section{Background and Problem Setting}
\label{sec:background}


Let $\mathcal{F}$ be a class of measurable vector fields $f:\mathbb{R}^n\to\mathbb{R}^n$ with finite $L^2(\nu)$ norm, where $\nu$ is the data-generating measure on states. 
We use the population inner product  
$\langle f,g\rangle_\nu = \mathbb{E}_{x\sim\nu}[f(x)^\top g(x)]$ 
and the empirical semi-inner product 
$\langle f,g\rangle_{\mathcal{D}} = \tfrac{1}{N}\sum_{i=1}^N f(x_i)^\top g(x_i)$ 
on the sampled states $\mathcal{D}=\{x_i\}_{i=1}^N$, with induced norm and seminorm $\|\cdot\|_\nu$ and $\|\cdot\|_{\mathcal{D}}$. 
For a closed linear subspace $\mathcal{F}_{\mathrm{phy}}\subseteq\mathcal{F}$, we denote the corresponding population projection by $P_{\mathcal{F}_{\mathrm{phy}}}^{\nu}$; empirically, $P_{\mathcal{F}_{\mathrm{phy}}}^{\mathcal{D}}$ denotes the projection under $\|\cdot\|_{\mathcal{D}}$, understood up to equivalence on $\mathcal{D}$.

The functions $f \in \mathcal{F}$ are interpreted as vector fields of autonomous, first-order differential equations
\begin{align*}
    \frac{\mathrm{d}x}{\mathrm{d}t} = f(x), \qquad \text{with solution trajectories } x: \mathbb{R} \to \mathbb{R}^n\:.
\end{align*}

Following prior work \citep{yin2021augmenting,rackauckas2020universal}, we assume an additive decomposition
\begin{align*}
    f = f_{\mathrm{phy}} + f_{\mathrm{aug}}, \qquad 
    f_{\mathrm{phy}} \in \mathcal{F}_{\mathrm{phy}},\; f_{\mathrm{aug}} \in \mathcal{F}
\end{align*}
of vector fields of interest into a ``physical'' (or symbolic/mechanistic) component and an ``augmented'' (or neural/residual) component.
The space $\mathcal{F}_{\mathrm{phy}} \subseteq \mathcal{F}$ of candidate symbolic components is typically restricted to functions that can be represented in closed form using known functions, so as to be amenable to direct interpretation and dissemination by humans.

Most existing methods assume $f_{\mathrm{phy}}$ to be either known exactly, or to be given as a parametric family in which only a (usually small) set of parameters is unknown.
Practically, this is often implemented via a linear combination of non-linear basis functions:
\begin{equation}
\mathcal{F}_{\mathrm{phy}}
= \left\{ \sum_{j=1}^M \alpha_j \phi_j \;\middle|\; \alpha_j \in \mathbb{R} \text{ for } j \in \{1,\ldots, M\} \right\},
\label{eq:lindict}
\end{equation}
where the dictionary functions $\phi_j : \mathbb{R}^n \to \mathbb{R}^n$ are fixed. The dynamics governing most real systems are not perfectly described by such simple closed-form expressions, but contain higher-order effects or complex interactions that are rarely captured by simple interpretable mathematical expressions.
To capture such residual effects, the augmentation $f_{\mathrm{aug}} \in \mathcal{F}$ is supposed to be flexible and expressive, albeit potentially not easily interpretable.
Hence, a natural choice to represent $f_{\mathrm{aug}}$ is via flexible function approximators such as neural networks, giving rise to the term ``neural component.''
Crucially, the neural component should \emph{only capture effects that cannot be captured by the symbolic component}.

In the current formulation, one could simply set $f_{\mathrm{aug}} \equiv f$ and $f_{\mathrm{phy}} \equiv 0$.
However, this would undermine the entire idea of hybrid modeling.
When the physical model class is fixed, \citet{yin2021augmenting} provide thorough theoretical guarantees showing that a relatively simple norm-based regularization scheme is sufficient to ensure that $f_{\mathrm{aug}}$ ``only captures what is necessary, but not more.''
A simplified vector-field version of this norm-regularized principle is
\begin{equation}
    \min_{f_{\mathrm{phy}} \in \mathcal{F}_{\mathrm{phy}}, f_{\mathrm{aug}} \in \mathcal{F}}
    \; \| f - f_{\mathrm{phy}} - f_{\mathrm{aug}} \|_{\mathcal{D}}^2
    + \lambda_2 \| f_{\mathrm{aug}} \|_{\mathcal{D}}^2\:.
    \label{eq:aphynity}
\end{equation}
For fixed $f_{\mathrm{phy}}$, writing $r=f-f_{\mathrm{phy}}$, the minimizer over $f_{\mathrm{aug}}$ is
\begin{align*}
    \hat f_{\mathrm{aug}} = \frac{1}{1+\lambda_2}r
    = \frac{1}{1+\lambda_2}(f - f_{\mathrm{phy}}).
\end{align*}
Substituting back gives
\begin{align*}
    \min_{f_{\mathrm{aug}}\in\mathcal F}
    \left[
    \| f - f_{\mathrm{phy}} - f_{\mathrm{aug}} \|_{\mathcal{D}}^2
    + \lambda_2 \| f_{\mathrm{aug}} \|_{\mathcal{D}}^2
    \right]
    =
    \frac{\lambda_2}{1+\lambda_2}
    \| f - f_{\mathrm{phy}} \|_{\mathcal{D}}^2.
\end{align*}
Thus, for $\lambda_2>0$, optimizing over $f_{\mathrm{phy}}$ is equivalent to empirical least-squares projection onto $\mathcal{F}_{\mathrm{phy}}$.
If $\mathcal{F}_{\mathrm{phy}}$ is a closed linear subspace, the Hilbert-space projection theorem gives $P_{\mathcal{F}_{\mathrm{phy}}}^{\mathcal{D}}(f)$ up to $\mathcal D$-equivalence, and the residual $f-P_{\mathcal{F}_{\mathrm{phy}}}^{\mathcal{D}}(f)$, hence $\hat f_{\mathrm{aug}}$, is orthogonal to $\mathcal{F}_{\mathrm{phy}}$ under $\langle\cdot,\cdot\rangle_{\mathcal D}$ \citep{lax2014functional}.
APHYNITY extends this projection view beyond linear subspaces via existence and uniqueness guarantees under \emph{proximinality} and \emph{Chebyshevness} \citep{yin2021augmenting}; see \cref{app:l2_failure}.

A natural extension beyond a fully known $f_{\mathrm{phy}}$, or the structure being known up to a small set of parameters, is to discover the symbolic component itself from a larger dictionary, as in SINDy \citep{brunton2016discovering}.
After fixing the candidate basis functions $\{\phi_j\}_{j=1}^M$ we select only a small support set $S \subset \{1,\dots,M\}$ of basis functions that enter the expression with non-zero coefficients. The induced function space is
\begin{align*}
    \mathcal{F}_{\mathrm{phy}}(S) := \operatorname{span}\{\phi_j \mid j \in S\}.
\end{align*}
In practice, the set $S$ is fitted via sparse regression methods (e.g., L1 regularization $\|\cdot \|_1$ or more involved iterated sparse regressions as in SINDy) to encourage small supports $S$.

While at first this appears to be a natural extension to APHYNITY, on closer inspection this changes the projection geometry underlying its argument.
When sparse discovery is implemented through continuous shrinkage penalties, the symbolic fit is no longer an unregularised projection onto the selected span; 
the residual can therefore retain components in the symbolic library, which the neural augmentation may relearn (cf.~\cref{fig:concept,ex:intro_sparsity_overlap}).


\textbf{This is the fundamental gap our work addresses:} expressive data-driven sparse symbolic discovery requires additional mechanisms to keep symbolic and neural components from overlapping. 
The closest faithful adaptation of APHYNITY to this setting is therefore a sparse symbolic component combined with an $L^2$-regularized neural augmentation, which we call the ``Hybrid $L^2$'' baseline.
\section{Method: OrthoReg for Hybrid Modeling}\label{sec:methods}

We observe state trajectories of an unknown dynamical system and aim to learn an autonomous vector field $f:\mathbb{R}^n\to\mathbb{R}^n$ such that $\dot x=f(x)$.
Following the hybrid modeling setup in \cref{sec:background}, we model
\begin{equation}
    \hat f(x)=\hat f_{\mathrm{phy}}(x;w)+\hat f_{\mathrm{aug}}(x;\vartheta),
    \;
    \hat f_{\mathrm{phy}}(x;w)=\sum_{j=1}^M w_j\,\phi_j(x),
\end{equation}
where $\{\phi_j\}_{j=1}^M$ is a fixed symbolic library, $w$ is sparse, and $\hat f_{\mathrm{aug}}(\cdot;\vartheta)$ is a flexible neural augmentation.

\paragraph{Learning from trajectories.}
The data consists of state observations over time (trajectories). Depending on the setting, derivative targets may or may not be available.
We therefore consider two standard data-fit losses for learning $\hat f$. If (approximate) derivatives $y_i \approx \dot x(t_i)$ are available (e.g., from a simulator or numerical differentiation), we can fit $\hat f$ by vector-field regression
\begin{equation}
    \mathcal{L}_{\mathrm{vf}}(w,\vartheta)
    =
    \frac{1}{N}\sum_{i=1}^N \big\|y_i-\hat f(x_i)\big\|^2.
    \label{eq:fit_loss_vf}
\end{equation}

If only states are observed, we fit by one-step prediction.
Let $\mathcal{T}$ be the set of observed one-step transitions $(x_t,x_{t+1},\Delta t_t)$ (potentially with irregular $\Delta t_t$) and let $\Psi_{\Delta t}(\cdot;\hat f)$ denote a one-step ODE solver (e.g., explicit Euler $\Psi_{\Delta t}(x;\hat f)=x+\Delta t\,\hat f(x)$ or a Runge--Kutta method).
The one-step loss is
\begin{equation}
    \mathcal{L}_{\mathrm{step}}(w,\vartheta)
    =
    \frac{1}{|\mathcal{T}|}\sum_{(x_t,x_{t+1},\Delta t_t)\in\mathcal{T}}
    \big\|x_{t+1}-\Psi_{\Delta t_t}(x_t;\hat f)\big\|^2.
    \label{eq:fit_loss_step}
\end{equation}

In the following, $\mathcal{L}_{\mathrm{fit}}$ denotes either $\mathcal{L}_{\mathrm{vf}}$ or $\mathcal{L}_{\mathrm{step}}$.
We train with $\mathcal{L}_{\mathrm{vf}}$ (state-only training is also compatible with OrthoReg; see \cref{app:limitations}), as derivative-space training was empirically more stable, and evaluate in both derivative and state space.
OrthoReg uses the empirical inner product $\langle \cdot, \cdot \rangle_{\mathcal{D}}$ over observed states $\mathcal{D}=\{x_i\}_{i=1}^N$, defined in \cref{sec:background}.

\subsection{OrthoReg Objective}

OrthoReg encourages a non-redundant decomposition by penalizing correlation between the neural augmentation and the symbolic library.
Concretely, we penalize the (squared) empirical inner products

\begin{equation}
    \mathcal{L}_{\mathrm{reg}}^{\perp}(\vartheta)
    =
    \lambda \sum_{j=1}^M\left\langle \hat{f}_{\mathrm{aug}}(\cdot;\vartheta), \phi_j\right\rangle_{\mathcal{D}}^2,
    \label{eq:orthoreg_penalty}
\end{equation}

so that $\mathcal{L}_{\mathrm{reg}}^{\perp}=0$ implies $\hat f_{\mathrm{aug}}\perp\operatorname{span}\{\phi_1,\dots,\phi_M\}$ w.r.t.\ $\langle\cdot,\cdot\rangle_{\mathcal{D}}$.
Let $\mu\ge 0$ control symbolic sparsity and $\lambda\ge 0$ the orthogonality strength, then
the full OrthoReg training objective is

\begin{equation}
    \min_{w,\vartheta}\;
    \mathcal{L}_{\mathrm{fit}}(w,\vartheta)
    + \mu \|w\|_1
    + \mathcal{L}_{\mathrm{reg}}^{\perp}(\vartheta),
    \label{eq:orthoreg_objective}
\end{equation}


\begin{figure}[h]
\centering
\fbox{\begin{minipage}{0.7\linewidth}
\textbf{Algorithm 1: OrthoReg training (vector-field regression).}\\[2pt]
\textbf{Input:} samples $\{(x_i,y_i)\}_{i=1}^N$, library $\{\phi_j\}_{j=1}^M$, weights $\lambda,\mu\ge 0$.\\
\textbf{Initialize} $w\in\mathbb{R}^M$, neural parameters $\vartheta$.\\
For each epoch and minibatch $\mathcal{B}\subset\{1,\ldots,N\}$ of size $B$:
\begin{enumerate}[leftmargin=*,itemsep=0pt,topsep=2pt]
  \item $\hat f(x_i) = \sum_{j=1}^M w_j\,\phi_j(x_i) + \hat f_{\mathrm{aug}}(x_i;\vartheta)$.
  \item $L_{\mathrm{fit}} = \tfrac{1}{B}\sum_{i\in\mathcal{B}}\|y_i-\hat f(x_i)\|^2$.
  \item $g_j = \tfrac{1}{B}\sum_{i\in\mathcal{B}}\hat f_{\mathrm{aug}}(x_i;\vartheta)^\top \phi_j(x_i)$, \quad
  $L_{\mathrm{orth}} = \lambda\sum_{j=1}^M g_j^2$.
  \item $L_{\mathrm{sparse}} = \mu\|w\|_1$.
  \item Update $(\vartheta,w)$ by autodiff on $L_{\mathrm{fit}}+L_{\mathrm{orth}}+L_{\mathrm{sparse}}$.
\end{enumerate}
\end{minipage}}
\end{figure}

The guarantee in \cref{thm:penalty_bound_main} is tied to the additive objective in \cref{eq:orthoreg_objective}: 
the orthogonality penalty must act on the additive residual $\hat f_{\mathrm{aug}}$ at the same empirical inputs as the data-fit. 
Staged or alternating updates of the same objective fall within the same formal scope when they reach a solution satisfying the theorem assumptions, whereas unconstrained residual fitting after shrinkage-based symbolic learning and compositional architectures of the form $T_w+N_\vartheta\circ T_w$ do not (\cref{prop:seq_two_stage_failure,app:sequential_extension}).


\subsection{Implementation and Computational Considerations}
\label{subsec:meth_impl}

\paragraph{Implementation.}
Algorithm~1 summarises training. 
We optimise the continuous $L^1$-regularised objective with Adam and report sparsity after thresholding small coefficients. 
The orthogonality penalty costs $\mathcal{O}(B\,n\,M)$ per batch; 
in mini-batch training, finite-batch squaring of the inner-product estimate introduces a small bias, so our experiments use full- or large-batch estimates unless stated otherwise. 
Hyperparameter selection, feature scaling, and the autonomous-augmentation caveat for non-autonomous data are described in \cref{app:hyperparam_selection,app:non_autonomy_caveat}.

\subsection{Theoretical Guarantees}

We first show that $L^2$ regularization does not prevent -- and can in fact induce -- overlap between symbolic and neural components under sparse selection (\cref{thm:l2_failure}, \cref{app:l2_failure}). 
The OrthoReg objective restores control via two structural results: 
a penalty bound showing that empirical overlap is controlled at the optimum and vanishes as $\lambda\to\infty$, and a triangular error decomposition (the in-library / orthogonal-complement split) that, under the empirical inner product, serves as a formal lens for the symbolic--neural decomposition.

\begin{theorem}[Penalty bound for empirical orthogonality]
\label{thm:penalty_bound_main}
Assume $\lambda>0$ and that \cref{eq:orthoreg_objective} admits a global minimizer $(w^*,\vartheta^*)$. Let $w_{\mathrm{sym}}\in\arg\min_w\,\mathcal{L}_{\mathrm{fit}}(w,\vartheta_0)+\mu\|w\|_1$ denote the best pure-symbolic fit at $\hat f_{\mathrm{aug}}\equiv 0$ (parameters $\vartheta_0$). Then
\begin{equation*}
\sum_{j=1}^M\langle \hat{f}_{\mathrm{aug}}(\cdot;\vartheta^*), \phi_j\rangle_{\mathcal{D}}^2 \le \tfrac{1}{\lambda}\bigl(\mathcal{L}_{\mathrm{fit}}(w_{\mathrm{sym}},\vartheta_0)+\mu\|w_{\mathrm{sym}}\|_1\bigr),
\end{equation*}
so empirical orthogonality is enforced as $\lambda\to\infty$.
\end{theorem}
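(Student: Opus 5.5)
The argument is a direct comparison of objective values: evaluate the objective \cref{eq:orthoreg_objective} at a feasible competitor that makes the orthogonality penalty vanish, then use global optimality of $(w^*,\vartheta^*)$ together with nonnegativity of the remaining terms.

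\textbf{Step 1 (competitor).} Take the pair $(w_{\mathrm{sym}},\vartheta_0)$, where $\vartheta_0$ realizes $\hat f_{\mathrm{aug}}\equiv 0$. Then $\langle \hat f_{\mathrm{aug}}(\cdot;\vartheta_0),\phi_j\rangle_{\mathcal D}=0$ for every $j$, so $\mathcal{L}_{\mathrm{reg}}^{\perp}(\vartheta_0)=0$. The total objective at this point is therefore exactly
\[
C:=\mathcal{L}_{\mathrm{fit}}(w_{\mathrm{sym}},\vartheta_0)+\mu\|w_{\mathrm{sym}}\|_1 .
\]
Here I assume that the zero augmentation is representable, for instance by setting the last layer to zero, which is implicit in the statement's use of $\vartheta_0$. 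Note that the argument only needs \emph{some} $w$ at $\vartheta_0$; the choice $w=w_{\mathrm{sym}}$ simply gives the sharpest such bound.

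\textbf{Step 2 (optimality).} Because $(w^*,\vartheta^*)$ is a global minimizer,
\[
\mathcal{L}_{\mathrm{fit}}(w^*,\vartheta^*)+\mu\|w^*\|_1+\lambda\sum_j\langle \hat f_{\mathrm{aug}}(\cdot;\vartheta^*),\phi_j\rangle_{\mathcal D}^2\le C .
\]

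\textbf{Step 3 (drop nonnegative terms).} Both $\mathcal{L}_{\mathrm{fit}}$ and $\mathcal{L}_{\mathrm{vf}}$ are averages of squared norms, $\mathcal{L}_{\mathrm{step}}$ is likewise nonnegative, and $\mu\ge0$. Discarding the first two terms on the left and dividing by $\lambda>0$ gives the claimed bound.

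\textbf{Step 4 (limit).} As $\lambda\to\infty$ the right-hand side tends to zero. The key point is that $C$ does not depend on $\lambda$, since $w_{\mathrm{sym}}$ is defined without the penalty. Hence the empirical overlap vanishes and $\hat f_{\mathrm{aug}}(\cdot;\vartheta^*)$ becomes $\langle\cdot,\cdot\rangle_{\mathcal D}$-orthogonal to $\operatorname{span}\{\phi_j\}$ in the limit.

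None of these steps is technically hard. The only point needing care is the one in Step 1, namely that the zero augmentation lies in the parametric class. If it does not, I would replace $\vartheta_0$ by any $\vartheta$ with $\mathcal{L}^{\perp}_{\mathrm{reg}}(\vartheta)=0$, which still yields a bound independent of $\lambda$.
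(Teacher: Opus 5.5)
Your proposal is correct and matches the paper's proof: both compare the objective at the global minimizer against the competitor $(w_{\mathrm{sym}},\vartheta_0)$, whose orthogonality penalty vanishes, then drop the nonnegative fit and sparsity terms and divide by $\lambda$. The two points you flag are exactly what the argument relies on. The bound does not depend on $\lambda$, and the zero augmentation must be representable, which the paper assumes explicitly in its appendix setup.
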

\begin{proof}[Proof idea]
The orthogonality term is a quadratic penalty: comparing the optimum to the baseline $\hat f_{\mathrm{aug}}\equiv 0$ yields a $1/\lambda$ bound on constraint violation. A full proof is given in \cref{thm:ortho_stationary}.
\end{proof}

\paragraph{Error decomposition.}
Beyond the penalty bound, orthogonality has a concrete geometric consequence for approximation.
In the idealized vector-field regression setting, and when the learned augmentation is empirically orthogonal to $\mathcal F_{\mathrm{phy}}$, the squared error admits an orthogonal decomposition:
\begin{equation}
\|f-(\hat f_{\mathrm{phy}}+\hat f_{\mathrm{aug}})\|_{\mathcal{D}}^2
= \|P_{\mathcal{F}_{\mathrm{phy}}}^{\mathcal{D}}(f)-\hat f_{\mathrm{phy}}\|_{\mathcal{D}}^2
+ \|f-P_{\mathcal{F}_{\mathrm{phy}}}^{\mathcal{D}}(f)-\hat f_{\mathrm{aug}}\|_{\mathcal{D}}^2,
\end{equation}
Thus, the symbolic error is the gap to the in-library projection $P_{\mathcal{F}_{\mathrm{phy}}}^{\mathcal{D}}(f)$, while the neural error is the gap to the empirical orthogonal-complement component $f-P_{\mathcal{F}_{\mathrm{phy}}}^{\mathcal{D}}(f)$; see \cref{thm:error_decomp}.
Under the empirical inner product, this decomposition serves as a formal lens for the symbolic--neural split: 
each component's error is the gap to its half of the orthogonal direct sum.

\paragraph{Comparison to $L^2$ regularization.}
$L^2$ regularization controls only the \emph{magnitude} of the augmentation through the decomposition
\begin{equation*}
    \|\hat{f}_{\mathrm{aug}}\|_{\mathcal{D}}^2
    =
    \|P_{\mathcal{F}_{\mathrm{phy}}}^{\mathcal{D}}(\hat{f}_{\mathrm{aug}})\|_{\mathcal{D}}^2
    +
    \|\hat{f}_{\mathrm{aug}} - P_{\mathcal{F}_{\mathrm{phy}}}^{\mathcal{D}}(\hat{f}_{\mathrm{aug}})\|_{\mathcal{D}}^2,
\end{equation*}
where the equality follows from orthogonal decomposition in inner product spaces \citep{rudin1987real}. Even when the total is small, the component $P_{\mathcal{F}_{\mathrm{phy}}}^{\mathcal{D}}(\hat{f}_{\mathrm{aug}})$ can be non-zero, allowing neural--symbolic overlap. When $\mathcal{F}_{\mathrm{phy}}$ is learned through sparsity constraints, this overlap can occur even in realizable settings, motivating explicit orthogonality regularization (\cref{app:l2_failure}).

\paragraph{Empirical-to-population transfer.}
OrthoReg controls overlap under the empirical inner product used for training. 
A formal population guarantee for the learned augmentation would require uniform convergence over the augmentation hypothesis class; we leave this to future work (\cref{app:limitations}).

\section{Experiments}
\label{sec:experiments}

\begin{table}[h]
\centering
\small
\setlength{\tabcolsep}{5pt}
\caption{
Damped Pendulum results. 
Scale factors $\mathrm{e}{k}$ inside parentheses are powers of ten; $x$/$\dot x$ are state/derivative spaces.}
\label{tab:sweet_spot_results}
\begin{tabular}{lccc}
\toprule
\textbf{Metric} & \textbf{Pure} & \textbf{$L^2$} & \textbf{OrthoReg} \\
\midrule
\multicolumn{4}{c}{\textbf{In-Distribution Performance}} \\
\midrule
MSE(${\scriptstyle \dot{x}_\mathrm{ID},\ \mathrm{e}{-2}}$) {\scalebox{0.8}{($\downarrow$)}} 
& $\bm{1.27\ {\scalebox{0.8}{$\pm 0.07$}}}$ 
& $1.41\ {\scalebox{0.8}{$\pm 0.05$}}$ 
& $1.89\ {\scalebox{0.8}{$\pm 0.01$}}$ \\
MSE(${\scriptstyle x_\mathrm{ID},\ \mathrm{e}{-2}}$) {\scalebox{0.8}{($\downarrow$)}} 
& $0.74\ {\scalebox{0.8}{$\pm 0.43$}}$ 
& $\bm{0.31\ {\scalebox{0.8}{$\pm 0.02$}}}$ 
& $3.65\ {\scalebox{0.8}{$\pm 0.36$}}$ \\
MSE(${\scriptstyle \dot{x}_\mathrm{ID,ext},\ \mathrm{e}{2}}$) {\scalebox{0.8}{($\downarrow$)}} 
  & $19.45\ {\scalebox{0.8}{$\pm 12.89$}}$ 
  & $5.00\ {\scalebox{0.8}{$\pm 2.42$}}$ 
  & $\bm{0.13\ {\scalebox{0.8}{$\pm 0.07$}}}$ \\
\midrule
\multicolumn{4}{c}{\textbf{Out-of-Distribution Performance}} \\
\midrule
MSE(${\scriptstyle x_\mathrm{OOD,T2},\ \mathrm{e}{0}}$) {\scalebox{0.8}{($\downarrow$)}} 
  & $1.02\ {\scalebox{0.8}{$\pm 0.07$}}$ 
  & $1.07\ {\scalebox{0.8}{$\pm 0.02$}}$ 
  & $1.01\ {\scalebox{0.8}{$\pm 0.01$}}$ \\
MSE(${\scriptstyle \dot{x}_\mathrm{OOD,T3},\ \mathrm{e}{2}}$) {\scalebox{0.8}{($\downarrow$)}} 
& $34.00\ {\scalebox{0.8}{$\pm 22.22$}}$
& $7.90\ {\scalebox{0.8}{$\pm 4.02$}}$
& $\bm{0.02\ {\scalebox{0.8}{$\pm 0.00$}}}$ \\
\midrule
\multicolumn{4}{c}{\textbf{System Identification Quality}} \\
\midrule
F1 {\scalebox{0.8}{($\uparrow$)}} 
& $0.43\ {\scalebox{0.8}{$\pm 0.09$}}$
& $0.61\ {\scalebox{0.8}{$\pm 0.03$}}$
& $\bm{0.93\ {\scalebox{0.8}{$\pm 0.15$}}}$ \\
\#Terms {\scalebox{0.8}{($\downarrow$)}} 
& $11.4\ {\scalebox{0.8}{$\pm 2.9$}}$
& $6.8\ {\scalebox{0.8}{$\pm 0.4$}}$
& $\bm{3.6\ {\scalebox{0.8}{$\pm 1.3$}}}$ \\
\bottomrule
\end{tabular}
\end{table}

We evaluate OrthoReg across four dynamical systems of different complexities: a modified damped pendulum, a Lotka--Volterra predator-prey system, a time-modulated SIR epidemiological model, and a Duffing oscillator. 
Our evaluation focuses on two complementary metrics: 
(i) trajectory accuracy measured by normalized mean-squared error (MSE) on derivatives and integrated states\footnote{MSE values are normalized by the squared norm of the target signal for scale invariance.} on in-distribution, out-of-distribution, and extrapolation states; and 
(ii) symbolic recovery quality measured by F1 score and the number of symbolic terms. 
We compare three approaches: 
pure symbolic regression (SINDy), an $L^2$-regularized hybrid, and an OrthoReg-regularized hybrid. 
We report along four axes consistent with multi-axis evaluation principles in scientific ML \citep{ctf2025seismic}: initial-condition extrapolation, parametric extrapolation, regime/basin extrapolation, and robustness under irregular sampling and observation noise.

\subsection{Damped Pendulum: Missing Dynamics}

The modified damped pendulum extends the classical damped pendulum \citep{kharkongor2018resonance} with three forcing terms that lie outside the symbolic dictionary:
\smallskip
\begin{equation}
\label{eq:pend}
    \ddot{\theta} + \alpha\dot{\theta} + \omega_0^2\sin\theta
    \;=\;
    \beta_1\cos(3\theta) + \beta_2 e^{-\theta^2} + \beta_3 \tanh(\dot\theta),
\end{equation}
with $(\beta_1,\beta_2,\beta_3)=(0.3,0.25,0.15)$.
The dictionary used by the symbolic component is $\mathrm{PolynomialLibrary}(\mathrm{degree}=2)\oplus\mathrm{FourierLibrary}(n_{\mathrm{freq}}=1)$, which contains $\sin\theta,\cos\theta,\sin(2\theta),\cos(2\theta)$ and monomials of $(\theta,\dot\theta)$ up to total degree two; the three forcing terms above lie outside this dictionary by construction.


\Cref{tab:sweet_spot_results} reports performance under medium missing dynamics ($\beta_1=0.3,\beta_2=0.25,\beta_3=0.15$ in \cref{eq:pend}). 
Derivative and state MSE quantify trajectory fit, and F1 measures symbolic recovery against ground-truth terms. 
OrthoReg yields the strongest symbolic recovery (highest F1, fewest terms) and the largest reductions in OOD-derivative error, at the cost of higher in-distribution state MSE. 
Quantitatively, OOD-derivative error drops by more than two orders of magnitude relative to the $L^2$ baseline while in-distribution state MSE is higher than the $L^2$ baseline. 
This trade-off is consistent with the symbolic / orthogonal-residual split established by \cref{thm:error_decomp}: 
enforcing the split sacrifices some in-distribution fit in exchange for components that transfer to OOD evaluations.


\subsection{Comparison with Pure Neural Baselines}

\begin{table}[h]
\centering
\small
\setlength{\tabcolsep}{5pt}
\caption{Baseline comparison on the modified damped pendulum. OrthoReg uniquely provides symbolic recovery while remaining competitive on prediction.}

\label{tab:baseline}
\begin{tabular}{lccc}
\toprule
\textbf{Metric} & \textbf{PINN} & \textbf{Universal ODE} & \textbf{OrthoReg} \\
\midrule
MSE(${\scriptstyle \dot{x}_\mathrm{ID},\ \mathrm{e}{-2}}$) {\scalebox{0.8}{($\downarrow$)}} 
  & $\bm{1.58\ {\scalebox{0.8}{$\pm 0.00$}}}$ 
  & $1.61\ {\scalebox{0.8}{$\pm 0.06$}}$ 
  & $1.89\ {\scalebox{0.8}{$\pm 0.01$}}$ \\
  MSE(${\scriptstyle x_\mathrm{OOD,T2},\ \mathrm{e}{0}}$) {\scalebox{0.8}{($\downarrow$)}} 
  & $1.02\ {\scalebox{0.8}{$\pm 0.01$}}$ 
  & $1.07\ {\scalebox{0.8}{$\pm 0.07$}}$ 
  & $1.01\ {\scalebox{0.8}{$\pm 0.01$}}$ \\
  
  MSE(${\scriptstyle \dot{x}_\mathrm{OOD,T3},\ \mathrm{e}{2}}$) {\scalebox{0.8}{($\downarrow$)}} 
  & $0.94\ {\scalebox{0.8}{$\pm 0.00$}}$ 
  & $1.02\ {\scalebox{0.8}{$\pm 0.18$}}$ 
  & $\bm{0.02\ {\scalebox{0.8}{$\pm 0.00$}}}$ \\
  
\midrule
F1 {\scalebox{0.8}{($\uparrow$)}} 
  & -- 
  & -- 
  & $\bm{0.93\ {\scalebox{0.8}{$\pm 0.15$}}}$ \\
\bottomrule
\end{tabular}
\end{table}

We also compare to pure neural approaches in \cref{tab:baseline}, including PINNs \citep{raissi2019physics} and Universal Differential Equations \citep{rackauckas2020universal}. 
While these baselines achieve competitive trajectory fitting, they cannot recover symbolic components. OrthoReg recovers a sparse symbolic component and is competitive on OOD$_{T2}$ and OOD$_{T3}$ derivative MSE. 
Implementation details are in \cref{app:baseline_implementation}.
Canonical PySINDy STLSQ (same library and splits as the main experiments) is summarised in \cref{app:stlsq_baseline}: it improves symbolic support recovery relative to the Adam $L^1$ pure-symbolic rows when thresholding is the sparsity mechanism, but unstable rollouts in partial mismatch show why residual capacity remains necessary.

\begin{figure}[h]
\centering
\includegraphics[width=\linewidth, trim=7 7 7 7, clip]{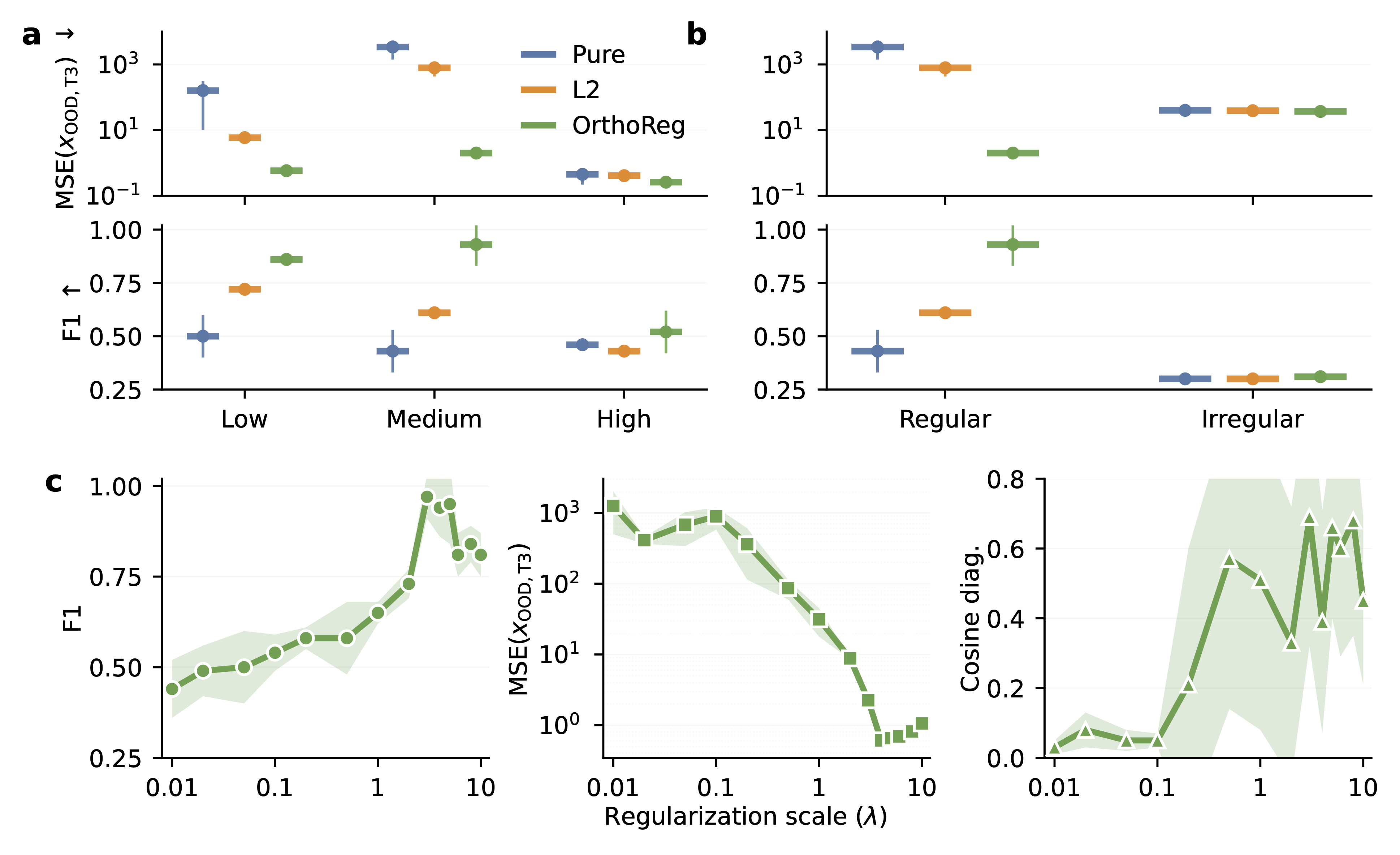}
\caption{\textbf{Ablations.} 
(a) OrthoReg is most effective under partial library mismatch. 
(b) Irregular sampling degrades all methods, while OrthoReg retains competitive OOD behavior. 
(c) Intermediate $\lambda$ gives the best trade-off between symbolic recovery and OOD error; right panel shows a residual--symbolic cosine diagnostic for OrthoReg to visualize separation.}

\label{fig:ablations}
\end{figure}

\subsection{Ablations}
\label{subsec:ablations}

We ablate three factors that probe when and why orthogonal regularization is beneficial: (i) the degree to which the true dynamics exceed the symbolic library, (ii) robustness to non-ideal (irregular) observation times, and (iii) the strength of the orthogonality penalty. \Cref{fig:ablations} summarizes the main trends. Complete quantitative results and additional ablation analyses are reported in \cref{app:additional_experiments}.

\vspace{1em}
\paragraph{Dataset difficulty (\cref{fig:ablations}a).}
OrthoReg's gains on stress-test generalization ($x_\mathrm{OOD,T3}$) are largest in the intermediate regime where the system partially exceeds the library; in low-missing regimes ($\beta_1=0.1, \beta_2=0.08, \beta_3=0.05$) differences shrink on simpler evaluations, and in high-missing regimes ($\beta_1=\beta_2=\beta_3=2.0$) all methods degrade with OrthoReg retaining a modest advantage. Full results in \cref{app:dataset_diff}.

\paragraph{Sampling scheme (\cref{fig:ablations}b).}
Under irregular sampling ($\Delta t \sim \text{Uniform}[0.01, 0.05]$ vs.\ regular $\Delta t = 0.02$), absolute performance drops across all methods, but OrthoReg retains a marginal F1 advantage and a competitive $x_\mathrm{OOD,T3}$. 
Full breakdown in \cref{app:sampling_ablation}.

\paragraph{Regularization strength (\cref{fig:ablations}c).} OrthoReg has an intermediate operating range: 
weak $\lambda$ under-enforces the orthogonality penalty, while overly strong $\lambda$ constrains the augmentation. 
We select $\lambda$ and $\mu$ by validation prediction loss on held-out in-distribution trajectories; 
F1 is tracked as a diagnostic only (\cref{app:hyperparam_selection}).
Right: cosine diagnostic visualizing residual--symbolic alignment within each method; regularization strength is assessed by F1 and OOD error.

\subsection{Duffing oscillator: regime-dependent generalization}

\begin{table}[h]
\centering
\small
\setlength{\tabcolsep}{5pt}
\caption{Duffing oscillator. Training on the positive basin; OOD T2 tests cross-basin (Regime-Ext), OOD T3 tests parameter shifts (Param-Ext).}
\label{tab:duffing_results_streamlined}
\begin{tabular}{lccc}
\toprule
\textbf{Metric} & \textbf{Pure} & \textbf{$L^2$} & \textbf{OrthoReg} \\
\midrule
\multicolumn{4}{c}{\textbf{Predictive Performance}} \\
\midrule
MSE(${\scriptstyle x_\mathrm{ID},\ \mathrm{e}{-1}}$) {\scalebox{0.8}{($\downarrow$)}} 
  & $\bm{4.75\ {\scalebox{0.8}{$\pm 0.13$}}}$ 
  & $5.36\ {\scalebox{0.8}{$\pm 0.14$}}$ 
  & $5.59\ {\scalebox{0.8}{$\pm 0.09$}}$ \\
MSE(${\scriptstyle x_\mathrm{OOD,T2},\ \mathrm{e}{0}}$) {\scalebox{0.8}{($\downarrow$)}} 
  & $11.03\ {\scalebox{0.8}{$\pm 6.09$}}$ 
  & $7.46\ {\scalebox{0.8}{$\pm 6.20$}}$ 
  & $\bm{4.93\ {\scalebox{0.8}{$\pm 0.35$}}}$ \\
MSE(${\scriptstyle x_\mathrm{OOD,T3},\ \mathrm{e}{-1}}$) {\scalebox{0.8}{($\downarrow$)}} 
  & $3.70\ {\scalebox{0.8}{$\pm 0.21$}}$ 
  & $\bm{3.44\ {\scalebox{0.8}{$\pm 0.12$}}}$ 
  & $3.46\ {\scalebox{0.8}{$\pm 0.05$}}$ \\
\midrule
\multicolumn{4}{c}{\textbf{System Identification Quality}} \\
\midrule
F1 {\scalebox{0.8}{($\uparrow$)}} 
  & $0.40\ {\scalebox{0.8}{$\pm 0.05$}}$ 
  & $0.59\ {\scalebox{0.8}{$\pm 0.04$}}$ 
  & $\bm{0.63\ {\scalebox{0.8}{$\pm 0.05$}}}$ \\
\#Terms {\scalebox{0.8}{($\downarrow$)}} 
  & $11.0\ {\scalebox{0.8}{$\pm 1.4$}}$ 
  & $3.8\ {\scalebox{0.8}{$\pm 0.4$}}$ 
  & $\bm{3.4\ {\scalebox{0.8}{$\pm 0.5$}}}$ \\
\bottomrule
\end{tabular}
\end{table}

We study the unforced Duffing oscillator, a multistable system whose cross-basin behavior probes whether models capture global structure beyond the observed basin~\citep{goring2024out}, and train on trajectories from the positive basin only. 
Here orthogonality is helpful because the symbolic library carries the global polynomial structure of the dynamics; 
discouraging the neural residual from absorbing those terms lets the symbolic component transfer across basins. 
\Cref{tab:duffing_results_streamlined} and \cref{fig:duffing} show that in-distribution all methods are comparable, while on cross-basin OOD T2 OrthoReg achieves the lowest error and strongest symbolic recovery; 
under parameter shifts (OOD T3) differences are within noise (details in \cref{app:subsec_duffing}). 
The gain is largest where evaluation stresses global state-space structure rather than raw fit, consistent with the orthogonal split established by \cref{thm:error_decomp}.

\begin{figure}[t]
\centering
\includegraphics[width=0.99\linewidth]{chapters/figures/Trajectories.png}
\caption{
(a) Damped pendulum in $(\theta,\omega)$, with $\omega:=\dot\theta$. (b) Duffing in $(x,\dot{x})$ with shaded basins. Solid: inferred; dashed: ground truth. OrthoReg captures global trends; pure symbolic and $L^2$ distort the dynamics. Rightmost panels show the OrthoReg decomposition, highlighting the neural residual.}
\label{fig:duffing}
\end{figure}

\subsection{Cross-System Validation}

As auxiliary stress tests beyond the pendulum and Duffing, we evaluate OrthoReg on a Lotka--Volterra predator-prey system (coupled polynomial dynamics) and a time-modulated SIR model (severe library mismatch and state-dependent timescales); 
full results are in \cref{app:cross_system_validation}. 
We read LV as a low-headroom sanity check (\cref{tab:lv_results_streamlined}: methods are comparable, with a small F1 gain for OrthoReg) and SIR as a misspecification stress test (\cref{tab:sir_results_streamlined}: OrthoReg trades ID and OOD-T2 accuracy for the strongest OOD-T3 extrapolation and the sparsest symbolic recovery -- highest F1 and fewest terms). 
The pattern is consistent with the symbolic--neural split.

\subsection{Summary}

Across systems, OrthoReg is most effective in the regime it is designed for: partial library misspecification, where the symbolic library captures part of the dynamics but leaves structured residual effects. 
In this setting, OrthoReg improves symbolic recovery and reduces out-of-distribution derivative error, while accepting some loss in in-distribution fit. 
The gains persist under irregular sampling, but diminish under heavy observation noise and when the missing dynamics are strongly entangled with library features (\cref{app:noise_robustness}). 
These results support the view of OrthoReg as a decomposition regularizer: 
it is not a universal accuracy booster, but a mechanism for preserving a meaningful symbolic component while assigning non-symbolic effects to the residual.

\section{Conclusion}

Hybrid symbolic--neural models promise interpretable structure together with flexible residual modeling, but this promise depends on a clean division of labor between the two components. 
We showed that this division becomes fragile when the symbolic component is discovered through sparse library learning: standard $L^2$ regularization controls the magnitude of the neural residual, but not whether it relearns symbolic structure. 
We introduced \textbf{OrthoReg}, which makes complementarity explicit by penalizing empirical overlap between the neural augmentation and the symbolic library.

Our analysis shows that projection-based regularization can induce symbolic--neural overlap under sparse discovery (\cref{thm:l2_failure}), whereas OrthoReg controls empirical orthogonality violation at global minimizers (\cref{thm:penalty_bound_main}) and yields an in-library / orthogonal-complement error split (\cref{thm:error_decomp}). 
Empirically, OrthoReg improves symbolic recovery and out-of-distribution behavior in regimes of partial library mismatch, where the library captures part of the dynamics but leaves structured residual effects.

Several limitations remain. 
Our guarantees are empirical-inner-product statements for additive models with fixed differentiable libraries and continuous sparsity relaxations; derivative supervision, feature scaling, and noise sensitivity also matter in practice (\cref{app:limitations}). 
A key next step is to extend orthogonality-regularized residual learning to broader symbolic-discovery pipelines, including canonical SINDy selection--refit, genetic-programming, and transformer-based symbolic regression. 
More broadly, OrthoReg points toward hybrid scientific models in which approximate mechanistic vocabularies and flexible residuals can coexist without erasing the symbolic explanation.
\section*{Acknowledgements}
We thank Manuel Lubetzki and Alessandro Palma for helpful discussions and constructive feedback on this manuscript.

\newpage

\bibliographystyle{plainnat}
\bibliography{bibliography}

\newpage
\appendix
\appendix

\section{Why $L^2$ Regularization Fails}
\label{app:l2_failure}

Here we analyze when and why $L^2$ regularization guarantees orthogonality. We show that APHYNITY's approach works for fixed symbolic components (\cref{prop:fixed_symbolic}), but fails when symbolic structure is learned jointly with sparsity constraints (\cref{thm:l2_failure}).

\paragraph{Problem Setup.}

Throughout this appendix, $\mathcal{F}$ is a linear function space (closed under sums and scalar multiplication) sufficiently expressive to contain $f-f_{\mathrm{phy}}$ for the $f_{\mathrm{phy}}\in\mathcal{F}_{\mathrm{phy}}$ at hand; 
this is what allows the projection arguments below.
Consider learning a hybrid model $\hat{f} = \hat{f}_{\mathrm{phy}} + \hat{f}_{\mathrm{aug}}$ where:

\begin{itemize}
    \item $\hat{f}_{\mathrm{phy}}(x;w) = \sum_{j=1}^M w_j \phi_j(x)$ with $w \in \mathbb{R}^M$ (symbolic component).
    \item $\hat{f}_{\mathrm{aug}}(x; \vartheta)$ is a neural network (neural component); 
    the family $\{\hat f_{\mathrm{aug}}(\cdot;\vartheta)\}_\vartheta$ admits a parameter $\vartheta_0$ for which $\hat f_{\mathrm{aug}}(\cdot;\vartheta_0)\equiv 0$ (e.g.\ all final-layer weights set to zero for an MLP without an additive skip connection at the output).
\end{itemize}

The standard $L^2$-regularized loss is:

\begin{equation}
\label{eq:l2_loss}
\mathcal{L}_{\mathrm{L^2}}(w, \vartheta) = \underbrace{\|f - \hat{f}_{\mathrm{phy}} - \hat{f}_{\mathrm{aug}}\|_{\mathcal{D}}^2}_{\text{reconstruction}} + \underbrace{\mu \|w\|_1}_{\text{sparsity}} + \underbrace{\lambda_2 \|\hat{f}_{\mathrm{aug}}\|_{\mathcal{D}}^2}_{\text{$L^2$ regularization}}
\end{equation}

\paragraph{When $L^2$ Regularization Guarantees Orthogonality.}

We first establish the positive result: $L^2$ regularization works as intended when the symbolic component is fixed. This is a key result of~\citep{yin2021augmenting}.

\begin{proposition}[Fixed symbolic hypothesis class]
\label{prop:fixed_symbolic}
Assume $\mathcal{F}_{\mathrm{phy}} = \operatorname{span}\{\phi_1,\ldots,\phi_M\}$ is fixed, $f_{\mathrm{aug}}$ is optimized over $\mathcal{F}$, and $\lambda_2>0$. Consider
\begin{equation*}
\min_{f_{\mathrm{phy}} \in \mathcal{F}_{\mathrm{phy}},\, f_{\mathrm{aug}}\in\mathcal{F}}
\|f - f_{\mathrm{phy}} - f_{\mathrm{aug}}\|_{\mathcal{D}}^2 + \lambda_2 \|f_{\mathrm{aug}}\|_{\mathcal{D}}^2.
\end{equation*}
Then any minimizer satisfies $f_{\mathrm{phy}} = P_{\mathcal{F}_{\mathrm{phy}}}^{\mathcal{D}}(f)$ and $f_{\mathrm{aug}} = \tfrac{1}{1+\lambda_2}(f - f_{\mathrm{phy}})$, hence $f_{\mathrm{aug}} \perp \mathcal{F}_{\mathrm{phy}}$.
\end{proposition}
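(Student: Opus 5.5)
The approach is to minimize out the augmentation first, then reduce the remaining problem to an empirical least-squares projection, exactly as sketched in \cref{sec:background}.

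\textbf{Step 1: inner minimization over the augmentation.} Fix $f_{\mathrm{phy}}\in\mathcal{F}_{\mathrm{phy}}$ and set $r=f-f_{\mathrm{phy}}\in\mathcal{F}$, which is allowed by the standing assumption that $\mathcal{F}$ is linear and contains such residuals. The map $g\mapsto\|r-g\|_{\mathcal{D}}^2+\lambda_2\|g\|_{\mathcal{D}}^2$ depends on $g$ only through its values on $\mathcal{D}$. Expanding the semi-inner product and completing the square gives
\begin{equation*}
\|r-g\|_{\mathcal{D}}^2+\lambda_2\|g\|_{\mathcal{D}}^2=(1+\lambda_2)\bigl\|g-\tfrac{1}{1+\lambda_2}r\bigr\|_{\mathcal{D}}^2+\tfrac{\lambda_2}{1+\lambda_2}\|r\|_{\mathcal{D}}^2 .
\end{equation*}
Because $\lambda_2>0$, the minimizers are exactly the $g$ with $g=\tfrac{1}{1+\lambda_2}r$ up to $\mathcal{D}$-equivalence, and the optimal value is $\tfrac{\lambda_2}{1+\lambda_2}\|f-f_{\mathrm{phy}}\|_{\mathcal{D}}^2$.

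\textbf{Step 2: link to joint minimizers.} If $(f_{\mathrm{phy}}^*,f_{\mathrm{aug}}^*)$ is a joint minimizer, then $f_{\mathrm{aug}}^*$ must minimize the inner problem at $f_{\mathrm{phy}}^*$. Hence $f_{\mathrm{aug}}^*=\tfrac{1}{1+\lambda_2}(f-f_{\mathrm{phy}}^*)$ on $\mathcal{D}$. It also follows that $f_{\mathrm{phy}}^*$ minimizes $\tfrac{\lambda_2}{1+\lambda_2}\|f-f_{\mathrm{phy}}\|_{\mathcal{D}}^2$ over $\mathcal{F}_{\mathrm{phy}}$: otherwise a better $f_{\mathrm{phy}}$, paired with its own optimal augmentation, would strictly lower the joint objective.

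\textbf{Step 3: projection.} Since $\lambda_2/(1+\lambda_2)>0$, $f_{\mathrm{phy}}^*$ is an empirical least-squares fit of $f$ from the finite-dimensional span. The finite-dimensional projection theorem for the semi-inner product (the normal equations, solvable via the Gram matrix $G_{jk}=\langle\phi_j,\phi_k\rangle_{\mathcal{D}}$) characterizes these minimizers exactly as $f_{\mathrm{phy}}^*=P^{\mathcal{D}}_{\mathcal{F}_{\mathrm{phy}}}(f)$, up to $\mathcal{D}$-equivalence. They are also characterized by the condition $\langle f-f_{\mathrm{phy}}^*,\phi_j\rangle_{\mathcal{D}}=0$ for all $j$. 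Since $f_{\mathrm{aug}}^*$ is a positive scalar multiple of $f-f_{\mathrm{phy}}^*$ on $\mathcal{D}$, orthogonality of $f_{\mathrm{aug}}^*$ to every $\phi_j$, and hence to $\mathcal{F}_{\mathrm{phy}}$, follows by linearity.

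\textbf{Main obstacle.} The only delicate point is that $\|\cdot\|_{\mathcal{D}}$ is a seminorm, so uniqueness holds only up to $\mathcal{D}$-equivalence, and $G$ may be singular. I will handle this by working in the quotient space of functions modulo agreement on $\mathcal{D}$, which is a genuine finite-dimensional inner product space. There the projection onto the image of $\mathcal{F}_{\mathrm{phy}}$ is unique, even though the coefficient vector $w$ need not be. Every equality in the statement is then read in that sense, matching the convention fixed in \cref{sec:background}.
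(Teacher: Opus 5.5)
Your proposal is correct and follows the same route as the paper's proof: minimize out $f_{\mathrm{aug}}$ to get $\tfrac{1}{1+\lambda_2}(f-f_{\mathrm{phy}})$, reduce to the empirical least-squares projection onto $\mathcal{F}_{\mathrm{phy}}$, and pass orthogonality of the residual to its positive multiple. The completion of the square, the explicit argument linking joint minimizers to the reduced problem in Step 2, and the quotient-space treatment of the seminorm are more careful versions of steps the paper states in one line each, or handles only by writing ``up to $\mathcal D$-equivalence''.
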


\begin{proof}
For fixed $f_{\mathrm{phy}}$, the objective is quadratic in $f_{\mathrm{aug}}$ and is minimized, up to $\mathcal D$-equivalence, by $f_{\mathrm{aug}}=\tfrac{1}{1+\lambda_2}(f-f_{\mathrm{phy}})$.
Plugging back reduces the problem to
\begin{equation*}
\min_{f_{\mathrm{phy}}\in\mathcal{F}_{\mathrm{phy}}}\|f - f_{\mathrm{phy}}\|_{\mathcal{D}}^2,
\end{equation*}
whose minimizer is the orthogonal projection $P_{\mathcal{F}_{\mathrm{phy}}}^{\mathcal{D}}(f)$. The residual $f - f_{\mathrm{phy}}$ is orthogonal to $\mathcal{F}_{\mathrm{phy}}$, and scaling preserves orthogonality.
\end{proof}

\paragraph{When $L^2$ Regularization Fails.}

The situation changes when we learn the symbolic structure itself. 
The following theorem shows that joint optimization breaks the orthogonality guarantee.

\begin{theorem}[$L^2$ failure with sparse symbolic learning]
\label{thm:l2_failure}
There exist $f$ and a fixed symbolic library $\{\phi_j\}_{j=1}^M$ such that the minimizer of \cref{eq:l2_loss} is \emph{not} orthogonal, that is, $\hat{f}_{\mathrm{aug}}^* \not\perp \mathcal{F}_{\mathrm{phy}}$, even though $f \in \mathcal{F}_{\mathrm{phy}}$.
\end{theorem}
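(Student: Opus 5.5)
The plan is to prove the statement by the explicit construction of \cref{ex:intro_sparsity_overlap}: take $M=2$, two library functions $\phi_1,\phi_2$ orthonormal under $\langle\cdot,\cdot\rangle_{\mathcal D}$, and target $f=\phi_1+\phi_2\in\mathcal{F}_{\mathrm{phy}}$. Since the neural family is assumed expressive enough to contain the relevant residuals, I will optimize $\hat f_{\mathrm{aug}}$ directly over $g\in\mathcal F$ in \cref{eq:l2_loss}. The goal is to compute the unique minimizer in closed form and check that $\langle g^*,\phi_1\rangle_{\mathcal D}\neq 0$.

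First I eliminate $g$ exactly as in \cref{prop:fixed_symbolic}. For fixed $w$, set $r=f-w_1\phi_1-w_2\phi_2$. The objective is strictly convex and quadratic in $g$, so its minimizer is $g=r/(1+\lambda_2)$, and the profiled value is $\tfrac{\lambda_2}{1+\lambda_2}\|r\|_{\mathcal D}^2+\mu\|w\|_1$. By orthonormality, $\|r\|_{\mathcal D}^2=(1-w_1)^2+(1-w_2)^2$. The problem therefore separates into two scalar lasso problems $\min_{w_j} c(1-w_j)^2+\mu|w_j|$ with $c=\lambda_2/(1+\lambda_2)$. Each is solved by soft thresholding, $w_j=\max\{0,\,1-\mu/(2c)\}=1-\tfrac{\mu(1+\lambda_2)}{2\lambda_2}$. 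This value is positive exactly when $\mu<2\lambda_2/(1+\lambda_2)$, and uniqueness follows from strict convexity in $w$.

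Next I substitute back. This gives $r=\tfrac{\mu(1+\lambda_2)}{2\lambda_2}(\phi_1+\phi_2)$, so $g^*=\tfrac{\mu}{2\lambda_2}(\phi_1+\phi_2)$. Then $\langle g^*,\phi_1\rangle_{\mathcal D}=\tfrac{\mu}{2\lambda_2}>0$ for any $\mu>0$, hence $g^*\not\perp\mathcal{F}_{\mathrm{phy}}$ even though $f\in\mathcal{F}_{\mathrm{phy}}$. The regime $\mu\ge 2\lambda_2/(1+\lambda_2)$ also works: there $w=0$ and $g^*=f/(1+\lambda_2)$, which is likewise not orthogonal. So any $\mu>0$ suffices.

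The only delicate step is justifying the reduction from the neural parametrization to optimization over all of $\mathcal F$, together with uniqueness up to $\mathcal D$-equivalence. I will handle this by noting that the setup assumes $\mathcal F$ contains $f-f_{\mathrm{phy}}$, and that the network can represent $\phi_1+\phi_2$ on the finite sample $\mathcal D$. The stated minimizer is then attained within the network family, and any other minimizer coincides with it on $\mathcal D$. Concrete orthonormal $\phi_j$ are easy to exhibit: with two sample points, take $\phi_1,\phi_2$ to be scaled indicator-like polynomials on those points.
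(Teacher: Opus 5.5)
Your proposal is correct and matches the paper's proof: you take orthonormal $\phi_1,\phi_2$ with $f=\phi_1+\phi_2$, profile out $g=r/(1+\lambda_2)$ to get a decoupled Lasso, soft-threshold to $w_j^*=1-\mu(1+\lambda_2)/(2\lambda_2)$, and read off $\langle g^*,\phi_j\rangle_{\mathcal D}=\mu/(2\lambda_2)\neq 0$. Your two extra remarks are also correct refinements that the paper leaves implicit: the regime $\mu\ge 2\lambda_2/(1+\lambda_2)$ yields $g^*=f/(1+\lambda_2)$, and the function-space minimizer is attained within a sufficiently expressive network family.
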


\begin{proof}

Let $\phi_1,\phi_2$ be orthonormal w.r.t.\ $\langle\cdot,\cdot\rangle_{\mathcal{D}}$ and set $f=\phi_1+\phi_2$, so $f \in \mathcal{F}_{\mathrm{phy}}:=\operatorname{span}\{\phi_1,\phi_2\}$. Consider the (function-space) version of \cref{eq:l2_loss}, with $g$ standing in for $\hat f_{\mathrm{aug}}$ to lighten notation:
\begin{equation*}
\min_{w\in\mathbb{R}^2,\,g\in\mathcal{F}}\ \|f-(w_1\phi_1+w_2\phi_2+g)\|_{\mathcal{D}}^2+\mu\|w\|_1+\lambda_2\|g\|_{\mathcal{D}}^2.
\end{equation*}
For fixed $w$, the unique minimizer in $g$ is $g(w)=\tfrac{1}{1+\lambda_2}(f-w_1\phi_1-w_2\phi_2)$, and the objective reduces to the Lasso problem
\begin{equation*}
\min_{w\in\mathbb{R}^2}\ \tfrac{\lambda_2}{1+\lambda_2}\,\|f-(w_1\phi_1+w_2\phi_2)\|_{\mathcal{D}}^2+\mu\|w\|_1.
\end{equation*}
Since $\phi_1,\phi_2$ are orthonormal and $f=\phi_1+\phi_2$, the problem decouples and for $0<\mu<\tfrac{2\lambda_2}{1+\lambda_2}$ the unique minimizer is
\begin{equation*}
w_1^*=w_2^*=1-\tfrac{\mu(1+\lambda_2)}{2\lambda_2}.
\end{equation*}
Thus
\begin{equation*}
g^* = g(w^*) = \tfrac{\mu}{2\lambda_2}(\phi_1+\phi_2),
\end{equation*}
so $\langle g^*,\phi_j\rangle_{\mathcal{D}}=\tfrac{\mu}{2\lambda_2}\neq 0$ for $j\in\{1,2\}$. Hence $g^*\not\perp \mathcal{F}_{\mathrm{phy}}$ although $f\in\mathcal{F}_{\mathrm{phy}}$.
\end{proof}


\section{Theoretical Guarantees of OrthoReg}
\label{app:ortho_theory}

Having established that $L^2$ regularization can fail under joint optimization with sparsity constraints, we now develop the theory of orthogonal regularization. We show that explicit orthogonality penalties yield quantitative control of neural--symbolic overlap and provide useful geometric structure.

\paragraph{The Orthogonality Constraint.}

OrthoReg solves:
\begin{equation}
\label{eq:ortho_loss}
\min_{w, \vartheta}\ \mathcal{L}_{\mathrm{fit}}(w,\vartheta) + \mu \|w\|_1 + \lambda \sum_{j=1}^M \langle \hat{f}_{\mathrm{aug}}(\cdot;\vartheta), \phi_j \rangle_{\mathcal{D}}^2,
\end{equation}
where $\mathcal{L}_{\mathrm{fit}}$ is any non-negative data-fit term (e.g., \cref{eq:fit_loss_vf,eq:fit_loss_step}).
Unlike $L^2$ regularization, OrthoReg penalizes the squared inner products
$\langle \hat{f}_{\mathrm{aug}}, \phi_j \rangle_{\mathcal{D}}^2$
rather than the total norm $\|\hat{f}_{\mathrm{aug}}\|_{\mathcal{D}}^2$.

\paragraph{Penalty method: orthogonality at optimum.}

\begin{theorem}[Penalty bound at optimum]
\label{thm:ortho_stationary}
Assume $\lambda>0$. Let $(w^*,\vartheta^*)$ be a global minimizer of \cref{eq:ortho_loss} and let
\begin{equation*}
w_{\mathrm{sym}} \in \arg\min_{w}\ \mathcal{L}_{\mathrm{fit}}(w,\vartheta_0) + \mu\|w\|_1
\end{equation*}
be the best pure symbolic fit, where $\hat f_{\mathrm{aug}}(\cdot;\vartheta_0)\equiv 0$. Then
\begin{equation*}
\sum_{j=1}^M \langle \hat{f}_{\mathrm{aug}}(\cdot;\vartheta^*), \phi_j \rangle_{\mathcal{D}}^2
\le
\frac{1}{\lambda}\Big(\mathcal{L}_{\mathrm{fit}}(w_{\mathrm{sym}},\vartheta_0)+\mu\|w_{\mathrm{sym}}\|_1\Big).
\end{equation*}
In particular, for any sequence of global minimizers with $\lambda\to\infty$, the empirical orthogonality violations converge to zero.
\end{theorem}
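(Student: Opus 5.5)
The plan is a direct comparison argument: evaluate the objective of \cref{eq:ortho_loss} at the global minimizer $(w^*,\vartheta^*)$ and at the feasible comparison point $(w_{\mathrm{sym}},\vartheta_0)$. Write
\[
J(w,\vartheta)=\mathcal{L}_{\mathrm{fit}}(w,\vartheta)+\mu\|w\|_1+\lambda\,\Omega(\vartheta),
\qquad
\Omega(\vartheta)=\sum_{j=1}^M\langle \hat f_{\mathrm{aug}}(\cdot;\vartheta),\phi_j\rangle_{\mathcal{D}}^2 .
\]

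\textbf{Step 1: the comparison point has no penalty.} Since $\hat f_{\mathrm{aug}}(\cdot;\vartheta_0)\equiv 0$, every inner product $\langle 0,\phi_j\rangle_{\mathcal{D}}$ vanishes. Hence $\Omega(\vartheta_0)=0$ and
\[
J(w_{\mathrm{sym}},\vartheta_0)=\mathcal{L}_{\mathrm{fit}}(w_{\mathrm{sym}},\vartheta_0)+\mu\|w_{\mathrm{sym}}\|_1 .
\]

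\textbf{Step 2: use global optimality.} Because $(w^*,\vartheta^*)$ is a global minimizer, $J(w^*,\vartheta^*)\le J(w_{\mathrm{sym}},\vartheta_0)$. The terms $\mathcal{L}_{\mathrm{fit}}$ and $\mu\|w\|_1$ are nonnegative, since $\mathcal{L}_{\mathrm{fit}}$ is assumed nonnegative and $\mu\ge 0$. Dropping them on the left gives
\[
\lambda\,\Omega(\vartheta^*)\le \mathcal{L}_{\mathrm{fit}}(w_{\mathrm{sym}},\vartheta_0)+\mu\|w_{\mathrm{sym}}\|_1 .
\]
Dividing by $\lambda>0$ yields the claimed bound.

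\textbf{Step 3: the limit statement.} The right-hand side is $C/\lambda$, where $C$ does not depend on $\lambda$. This holds because $w_{\mathrm{sym}}$ is defined by the pure-symbolic problem, in which the penalty plays no role. Any choice in the argmin gives the same optimal value $C$, so the non-uniqueness of $w_{\mathrm{sym}}$ is harmless. It follows that $\Omega(\vartheta^*_\lambda)\le C/\lambda\to 0$ for any sequence of global minimizers as $\lambda\to\infty$.

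\textbf{Where care is needed.} The argument is short, and its only delicate points are the well-posedness assumptions:
\begin{itemize}
\item $\vartheta_0$ must lie in the parametrized family, which the setup ensures.
\item A minimizer of the pure-symbolic problem must exist. This holds because the objective in $w$ is coercive, by the $L^1$ term when $\mu>0$. If existence were in doubt, I would instead compare against an arbitrary $w$ and take the infimum, which gives the same bound.
\item The argument compares against a global minimizer. It says nothing about stationary points reached by Adam.
\end{itemize}
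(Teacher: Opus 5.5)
Your proposal is correct and matches the paper's proof. You compare the objective at the global minimizer with its value at $(w_{\mathrm{sym}},\vartheta_0)$, where the penalty vanishes because $\hat f_{\mathrm{aug}}(\cdot;\vartheta_0)\equiv 0$, then drop the nonnegative fit and sparsity terms and divide by $\lambda$; your added remark that the right-hand side is independent of $\lambda$ and of the choice in the argmin makes the limit claim slightly more explicit than the paper does.
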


\begin{proof}
By optimality of $(w^*,\vartheta^*)$, we have
\begin{equation*}
\begin{aligned}
&\mathcal{L}_{\mathrm{fit}}(w^*,\vartheta^*)
+\mu\|w^*\|_1
+\lambda\sum_{j=1}^M
\bigl\langle \hat{f}_{\mathrm{aug}}(\cdot;\vartheta^*), \phi_j \bigr\rangle_{\mathcal{D}}^2 \\
&\leqslant\;
\mathcal{L}_{\mathrm{fit}}(w_{\mathrm{sym}},\vartheta_0)
+\mu\|w_{\mathrm{sym}}\|_1,
\end{aligned}
\end{equation*}

where $\hat f_{\mathrm{aug}}(\cdot;\vartheta_0)\equiv 0$ by definition. Dropping the non-negative first two terms on the left yields the stated bound. The final claim follows by sending $\lambda\to\infty$.
\end{proof}

\paragraph{Remark (uniqueness up to $\mathcal D$-equivalence).}
The empirical $\langle\cdot,\cdot\rangle_{\mathcal D}$ is a seminorm on $\mathcal F$: 
functions agreeing on $\mathcal D$ are identified. 
The minimisers in \cref{prop:fixed_symbolic} and \cref{thm:penalty_bound_main} are therefore unique only up to $\mathcal D$-equivalence; 
this matters most when $\mathcal D$ does not separate the augmentation hypothesis class on $\mathcal X$, and is mild in our reported regimes where the empirical samples are dense.

\paragraph{Approximation Quality.}

Beyond enforcing orthogonality, we want to understand what orthogonality buys in terms of approximation structure. 
We decompose the error into symbolic and orthogonal-complement components with the orthogonal decomposition:

\begin{theorem}[Orthogonal error decomposition]
\label{thm:error_decomp}
Let $\mathcal{F}_{\mathrm{phy}}$ be a closed linear subspace of $(\mathcal{F},\langle\cdot,\cdot\rangle_{\mathcal{D}})$ and let $P_{\mathcal{F}_{\mathrm{phy}}}^{\mathcal{D}}$ denote the orthogonal projection onto $\mathcal{F}_{\mathrm{phy}}$ with respect to $\langle\cdot,\cdot\rangle_{\mathcal{D}}$.
For any $f_{\mathrm{phy}}\in\mathcal{F}_{\mathrm{phy}}$ and any $f_{\mathrm{aug}}\in\mathcal{F}$ satisfying $f_{\mathrm{aug}}\perp\mathcal{F}_{\mathrm{phy}}$, we have

\begin{align*}
\|f-(f_{\mathrm{phy}}+f_{\mathrm{aug}})\|_{\mathcal{D}}^2
&=
\|P_{\mathcal{F}_{\mathrm{phy}}}^{\mathcal{D}}(f)-f_{\mathrm{phy}}\|_{\mathcal{D}}^2 \\
&\quad +
\|f-P_{\mathcal{F}_{\mathrm{phy}}}^{\mathcal{D}}(f)-f_{\mathrm{aug}}\|_{\mathcal{D}}^2 .
\end{align*}

\end{theorem}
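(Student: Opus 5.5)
The identity is a Pythagorean decomposition, so the plan is to rewrite the total error as a sum of one vector lying in $\mathcal{F}_{\mathrm{phy}}$ and one vector orthogonal to it. Abbreviate $P:=P_{\mathcal{F}_{\mathrm{phy}}}^{\mathcal{D}}$ and insert $\pm P(f)$:
\begin{equation*}
f-(f_{\mathrm{phy}}+f_{\mathrm{aug}}) \;=\; \underbrace{\bigl(P(f)-f_{\mathrm{phy}}\bigr)}_{=:a} \;+\; \underbrace{\bigl(f-P(f)-f_{\mathrm{aug}}\bigr)}_{=:b}.
\end{equation*}
Expanding the squared seminorm gives
\begin{equation*}
\|a+b\|_{\mathcal{D}}^2=\|a\|_{\mathcal{D}}^2+2\langle a,b\rangle_{\mathcal{D}}+\|b\|_{\mathcal{D}}^2.
\end{equation*}
This expansion uses only bilinearity and symmetry of $\langle\cdot,\cdot\rangle_{\mathcal{D}}$, so it holds for a semi-inner product as well. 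It therefore suffices to show that the cross term $\langle a,b\rangle_{\mathcal{D}}$ vanishes.

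First, $a\in\mathcal{F}_{\mathrm{phy}}$: both $P(f)$ and $f_{\mathrm{phy}}$ lie in the subspace, and the subspace is closed under differences. Second, $b\perp\mathcal{F}_{\mathrm{phy}}$. Write $b=(f-P(f))-f_{\mathrm{aug}}$. The first piece is orthogonal to $\mathcal{F}_{\mathrm{phy}}$ by the characterization of the orthogonal projection. That characterization is the projection theorem invoked in \cref{prop:fixed_symbolic}, namely that the residual $f-P(f)$ is orthogonal to the subspace. The second piece is orthogonal to $\mathcal{F}_{\mathrm{phy}}$ by hypothesis. Orthogonal complements are linear subspaces, so $b$ is orthogonal to $\mathcal{F}_{\mathrm{phy}}$. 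In particular $\langle a,b\rangle_{\mathcal{D}}=0$, and the claimed identity follows.

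The only delicate point is making sense of $P$ when $\langle\cdot,\cdot\rangle_{\mathcal{D}}$ is merely a semi-inner product. I would handle this as the paper does, by defining $P(f)$ up to $\mathcal{D}$-equivalence. In the relevant case $\mathcal{F}_{\mathrm{phy}}$ is finite-dimensional, so $P(f)$ can be taken as any minimizer of $\|f-g\|_{\mathcal{D}}^2$ over $g\in\mathcal{F}_{\mathrm{phy}}$; such a minimizer exists because it is a finite least-squares problem. The first-order conditions of this minimization give $\langle f-P(f),\phi_j\rangle_{\mathcal{D}}=0$ for every spanning $\phi_j$, which is exactly the orthogonality used above. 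Different choices of minimizer differ by a function of zero seminorm, and such a function changes none of the seminorms appearing in the identity. The identity is therefore well defined regardless of which representative is chosen.
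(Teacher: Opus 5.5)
Your proof is correct and follows the paper's argument: you split the error as $(P(f)-f_{\mathrm{phy}})+(f-P(f)-f_{\mathrm{aug}})$, observe that the first piece lies in $\mathcal{F}_{\mathrm{phy}}$ and the second in its orthogonal complement, and expand the square. Your remarks on the semi-inner-product case are sound: the minimizer exists by least squares, and different choices of $P(f)$ differ only by a zero-seminorm function that leaves every term unchanged. They make explicit what the paper leaves to its $\mathcal{D}$-equivalence convention.
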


\begin{proof}
Write $f=P_{\mathcal{F}_{\mathrm{phy}}}^{\mathcal{D}}(f) + (f-P_{\mathcal{F}_{\mathrm{phy}}}^{\mathcal{D}}(f))$ with $P_{\mathcal{F}_{\mathrm{phy}}}^{\mathcal{D}}(f)\in\mathcal{F}_{\mathrm{phy}}$ and $f-P_{\mathcal{F}_{\mathrm{phy}}}^{\mathcal{D}}(f)\perp\mathcal{F}_{\mathrm{phy}}$.
Then
\begin{align*}
f-(f_{\mathrm{phy}}+f_{\mathrm{aug}})
&= 
\big(P_{\mathcal{F}_{\mathrm{phy}}}^{\mathcal{D}}(f)-f_{\mathrm{phy}}\big) \\
&\quad +
\big(f-P_{\mathcal{F}_{\mathrm{phy}}}^{\mathcal{D}}(f)-f_{\mathrm{aug}}\big),
\end{align*}
where the first term lies in $\mathcal{F}_{\mathrm{phy}}$ and the second term lies in $\mathcal{F}_{\mathrm{phy}}^{\perp}$, hence they are orthogonal. Expanding the squared norm yields the identity.
\end{proof}


\subsection{Finite-Sample Analysis}
\label{app:finite_sample}

Our theoretical analysis so far has operated under the empirical inner product.
In finite-sample settings we enforce orthogonality empirically, and here we quantify how well empirical inner products approximate their population counterparts.

\paragraph{Empirical vs Population Orthogonality.}

\begin{definition}[Orthogonality Gap]
Define the empirical inner product
\begin{equation*}
\langle f, g \rangle_{\mathcal{D}}
= \frac{1}{N}\sum_{i=1}^N f(x_i)^\top g(x_i),
\end{equation*}
and the population inner product
\begin{equation*}
\langle f, g \rangle_{\nu}
= \mathbb{E}_{x \sim \nu}[f(x)^\top g(x)].
\end{equation*}
The orthogonality gap is $|\langle f, g \rangle_{\mathcal{D}} - \langle f, g \rangle_{\nu}|$.
\end{definition}

The following concentration result applies to fixed functions $f$ and $g$, that is, functions that do not depend on the specific sample $\mathcal{D}$.

\begin{theorem}[Finite-Sample Guarantee]
\label{thm:finite_sample}
Let $f, g: \mathcal{X} \to \mathbb{R}^n$ be fixed functions and assume $|f(x)^\top g(x)| \leq R$ almost surely. Let $\mathcal{D} = \{x_1, \ldots, x_N\}$ be i.i.d.\ samples from distribution $\nu$. Then with probability at least $1-\delta$:
\begin{equation*}
|\langle f, g \rangle_{\mathcal{D}} - \langle f, g \rangle_{\nu}|
\leq R\sqrt{\frac{2\log(2/\delta)}{N}}.
\end{equation*}
\end{theorem}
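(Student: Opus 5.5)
The plan is to apply Hoeffding's inequality to the i.i.d.\ scalar random variables $Z_i := f(x_i)^\top g(x_i)$, $i=1,\dots,N$. Because $f$ and $g$ are fixed, i.e.\ chosen independently of the sample $\mathcal{D}$, the $Z_i$ are independent and identically distributed. Each $Z_i$ has mean $\mathbb{E}[Z_i]=\mathbb{E}_{x\sim\nu}[f(x)^\top g(x)]=\langle f,g\rangle_\nu$; this expectation exists and is finite by the almost-sure bound. The empirical inner product is exactly their sample mean, $\langle f,g\rangle_{\mathcal{D}}=\frac1N\sum_i Z_i$. The orthogonality gap is therefore the deviation of a sample mean of bounded i.i.d.\ variables from its expectation.

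The key steps, in order, are as follows.

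First, record the range. Since $|Z_i|\le R$ almost surely, each $Z_i\in[-R,R]$, an interval of width $b-a=2R$.

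Second, invoke the two-sided Hoeffding inequality:
\begin{equation*}
\Pr\Bigl(\Bigl|\tfrac1N\textstyle\sum_{i=1}^N Z_i-\mathbb{E}Z_1\Bigr|\ge t\Bigr)\le 2\exp\Bigl(-\tfrac{2N t^2}{(2R)^2}\Bigr)=2\exp\Bigl(-\tfrac{N t^2}{2R^2}\Bigr).
\end{equation*}

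Third, set the right-hand side equal to $\delta$ and solve for $t$. This gives $t=R\sqrt{2\log(2/\delta)/N}$. Taking complements, the stated bound holds with probability at least $1-\delta$.

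I do not expect a real obstacle, since this is a direct concentration statement. The point needing the most care is the constant: the range of $Z_i$ is $[-R,R]$, not $[0,R]$, so the width $2R$ must enter Hoeffding's exponent. That width is exactly what produces the factor $2$ inside the square root.

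It is also worth flagging, as the paper does, that the argument uses the independence of $f,g$ from $\mathcal{D}$ essentially. For the learned augmentation $\hat f_{\mathrm{aug}}(\cdot;\vartheta^*)$, which depends on the sample, the result would have to be upgraded to a uniform bound, for example via a union bound over a finite cover or a Rademacher-complexity argument. That extension is outside this statement.
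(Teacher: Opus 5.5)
Your proposal is correct and follows the paper's proof essentially step for step. Like the paper, you apply two-sided Hoeffding to $Z_i = f(x_i)^\top g(x_i) \in [-R,R]$ with width $2R$, set $2\exp(-Nt^2/(2R^2)) = \delta$, and solve for $t = R\sqrt{2\log(2/\delta)/N}$. Your remark that the bound needs $f,g$ to be independent of $\mathcal{D}$ matches the paper's own caveat about the learned augmentation.
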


\begin{proof}
Define random variables $Z_i = f(x_i)^\top g(x_i)$ for $i=1,\ldots,N$, where $x_i \sim \nu$ are i.i.d.\ and satisfy $Z_i \in [-R, R]$ almost surely. Then
\begin{equation*}
\langle f, g \rangle_{\mathcal{D}} = \frac{1}{N}\sum_{i=1}^N Z_i,
\qquad
\langle f, g \rangle_{\nu} = \mathbb{E}[Z_i].
\end{equation*}

Hoeffding's inequality states that for i.i.d.\ random variables $Z_i \in [a,b]$,
\begin{equation*}
\mathbb{P}\!\left(\left|\frac{1}{N}\sum_{i=1}^N Z_i - \mathbb{E}[Z_i]\right|
\geq \epsilon \right)
\leq 2\exp\!\left(-\frac{2N\epsilon^2}{(b-a)^2}\right).
\end{equation*}

Applying this with $a=-R$, $b=R$ and solving
\begin{equation*}
2\exp(-2N\epsilon^2 / 4R^2) = \delta
\end{equation*}
yields
\begin{equation*}
\epsilon = R\sqrt{2\log(2/\delta)/N}.
\end{equation*}
\end{proof}

\begin{corollary}[Population Orthogonality]
If we enforce $\langle \hat f_{\mathrm{aug}}, \phi_j \rangle_{\mathcal{D}} = 0$ empirically, then under the assumption that $\hat f_{\mathrm{aug}}$ is fixed independently of $\mathcal{D}$, we have
\begin{equation*}
|\langle \hat f_{\mathrm{aug}}, \phi_j \rangle_{\nu}|
\leq R\sqrt{\frac{2\log(2/\delta)}{N}}
\end{equation*}
with probability $1-\delta$.
\end{corollary}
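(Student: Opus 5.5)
The corollary is a direct application of \cref{thm:finite_sample} with $f=\hat f_{\mathrm{aug}}$ and $g=\phi_j$, combined with the empirical constraint.

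\textbf{Step 1: check the hypotheses of \cref{thm:finite_sample}.} By assumption, $\hat f_{\mathrm{aug}}$ is fixed independently of $\mathcal D$. The library function $\phi_j$ is fixed by construction. So both are deterministic functions, and the i.i.d.\ structure of the summands $Z_i=\hat f_{\mathrm{aug}}(x_i)^\top\phi_j(x_i)$ is preserved. We also need $|\hat f_{\mathrm{aug}}(x)^\top\phi_j(x)|\le R$ almost surely. I will read $R$ in the corollary as this bound, taken as an implicit standing assumption inherited from the theorem.

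\textbf{Step 2: apply the theorem and substitute the constraint.} \cref{thm:finite_sample} gives, with probability at least $1-\delta$,
\begin{equation*}
|\langle \hat f_{\mathrm{aug}},\phi_j\rangle_{\mathcal D}-\langle \hat f_{\mathrm{aug}},\phi_j\rangle_{\nu}|\le R\sqrt{2\log(2/\delta)/N}.
\end{equation*}
Substituting $\langle \hat f_{\mathrm{aug}},\phi_j\rangle_{\mathcal D}=0$ yields the stated bound on $|\langle\hat f_{\mathrm{aug}},\phi_j\rangle_\nu|$.

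\textbf{Step 3: simultaneous control over all $j$.} If a bound is wanted for all $j$ at once, I will add a union bound over the $M$ library functions, replacing $\delta$ by $\delta/M$. The statement as written is per-$j$, so this step is optional.

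\textbf{Main obstacle: the independence assumption, which is in tension with the premise.} If $\hat f_{\mathrm{aug}}$ is independent of $\mathcal D$, then a zero empirical inner product is a data-dependent event. Conditioning on that event would break the i.i.d.\ premise of Hoeffding's inequality. I will handle this by stating the conclusion as a high-probability inequality that holds on the intersection of the Hoeffding event with the event $\langle\hat f_{\mathrm{aug}},\phi_j\rangle_{\mathcal D}=0$. No conditioning is needed for this.

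For the realistic case where $\hat f_{\mathrm{aug}}$ is trained on $\mathcal D$, the corollary does not apply. There one would need a sample-split (held-out) argument or uniform convergence over the augmentation class. I would flag this rather than prove it, consistent with \cref{app:limitations}.
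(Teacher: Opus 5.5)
Your proposal is correct and follows the paper's route. The paper gives no separate proof: the corollary is \cref{thm:finite_sample} with $f=\hat f_{\mathrm{aug}}$ and $g=\phi_j$, followed by substituting $\langle \hat f_{\mathrm{aug}},\phi_j\rangle_{\mathcal D}=0$. The paper's subsequent remark also flags, as you do, that the independence assumption fails for a trained augmentation.

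One phrasing fix for your handling of the premise: state it as ``on the Hoeffding event, which has probability at least $1-\delta$, a vanishing empirical inner product implies the population bound.'' The intersection with $\{\langle\hat f_{\mathrm{aug}},\phi_j\rangle_{\mathcal D}=0\}$ need not itself have high probability; for a fixed $\hat f_{\mathrm{aug}}$ and continuous data, that event may even have probability zero.
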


\paragraph{Intuition.}
Each inner product 
\begin{equation*}
\langle \hat{f}_{\mathrm{aug}}, \phi_j \rangle_{\mathcal{D}}
= \frac{1}{N} \sum_{i=1}^N \hat{f}_{\mathrm{aug}}(x_i)^{\top} \phi_j(x_i)
\end{equation*}
is an average of $N$ bounded random variables. Hoeffding's inequality for 
bounded variables $Z_i \in [-R, R]$ yields concentration around the population 
mean at rate $O(1/\sqrt{N})$.

\paragraph{Remark.}
Since $\hat f_{\mathrm{aug}}$ is trained on the same sample used to evaluate $\langle \hat f_{\mathrm{aug}},\phi_j\rangle_{\mathcal{D}}$, the independence assumption is violated.
A population-level guarantee for the \emph{learned} augmentation therefore requires uniform convergence over the augmentation hypothesis class (e.g.\ Rademacher or VC-style bounds), and not single-function concentration; we discuss this in \cref{app:limitations}.

\subsection{Monte Carlo Approximation}
\label{app:mc_analysis}

In training we estimate the empirical inner products in the orthogonality penalty using minibatches. We quantify the resulting stochastic approximation error.

\paragraph{Minibatch estimator.}

Let $\mathcal{D}=\{x_i\}_{i=1}^N$ and fix a constraint $\phi_j$.
For a minibatch $\mathcal{B}\subseteq\{1,\dots,N\}$ of size $B$, define
\begin{align*}
\langle \hat f_{\mathrm{aug}}, \phi_j \rangle_{\mathcal{B}}
&:= \frac{1}{B}\sum_{i\in\mathcal{B}} \hat f_{\mathrm{aug}}(x_i)^\top \phi_j(x_i),
 \\
\langle \hat f_{\mathrm{aug}}, \phi_j \rangle_{\mathcal{D}}
&:= \frac{1}{N}\sum_{i=1}^N \hat f_{\mathrm{aug}}(x_i)^\top \phi_j(x_i).
\end{align*}
Assume $|\hat f_{\mathrm{aug}}(x)^\top \phi_j(x)|\le R$ for all $x\in\mathcal{D}$.

\begin{lemma}[Minibatch deviation]
\label{lem:minibatch_deviation}
If $\mathcal{B}$ is sampled uniformly (with replacement) from $\mathcal{D}$, then for any $\delta\in(0,1)$,
\begin{equation*}
\mathbb{P}\!\left(
\bigl|\langle \hat f_{\mathrm{aug}}, \phi_j \rangle_{\mathcal{B}}
-\langle \hat f_{\mathrm{aug}}, \phi_j \rangle_{\mathcal{D}}\bigr|
\ge \epsilon\right)
\le
2\exp\!\left(-\frac{B\epsilon^2}{2R^2}\right),
\end{equation*}
and equivalently, with probability at least $1-\delta$,
\begin{equation*}
\bigl|\langle \hat f_{\mathrm{aug}}, \phi_j \rangle_{\mathcal{B}}
-\langle \hat f_{\mathrm{aug}}, \phi_j \rangle_{\mathcal{D}}\bigr|
\le
R\sqrt{\frac{2\log(2/\delta)}{B}}.
\end{equation*}
The same scaling holds for sampling without replacement (up to a finite-population correction factor).
\end{lemma}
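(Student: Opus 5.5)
The plan is to reduce \cref{lem:minibatch_deviation} to Hoeffding's inequality, exactly as in the proof of \cref{thm:finite_sample}, but now conditionally on the fixed dataset $\mathcal{D}$. The randomness comes only from drawing the minibatch indices. Concretely, write $\mathcal{B}=\{I_1,\dots,I_B\}$, where $I_1,\dots,I_B$ are i.i.d.\ uniform on $\{1,\dots,N\}$ (sampling with replacement). Define
\begin{equation*}
Z_k := \hat f_{\mathrm{aug}}(x_{I_k})^\top \phi_j(x_{I_k}), \qquad k=1,\dots,B.
\end{equation*}
Conditionally on $\mathcal{D}$ and on the parameters $\vartheta$, these are i.i.d., and by assumption $Z_k\in[-R,R]$. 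Their common mean is
\begin{equation*}
\mathbb{E}[Z_k]=\tfrac1N\sum_{i=1}^N \hat f_{\mathrm{aug}}(x_i)^\top\phi_j(x_i)=\langle \hat f_{\mathrm{aug}},\phi_j\rangle_{\mathcal{D}},
\end{equation*}
and their sample average is $\langle \hat f_{\mathrm{aug}},\phi_j\rangle_{\mathcal{B}}$. A subtlety to flag is that $\vartheta$ must not depend on the current minibatch draw. It may depend on $\mathcal{D}$ and on past batches, so the statement is conditional on the current iterate.

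Next I apply Hoeffding with $a=-R$ and $b=R$, so that $(b-a)^2=4R^2$. This gives the tail bound
\begin{equation*}
\mathbb{P}(|\cdot|\ge\epsilon)\le 2\exp\bigl(-2B\epsilon^2/(4R^2)\bigr)=2\exp\bigl(-B\epsilon^2/(2R^2)\bigr),
\end{equation*}
which is the first claim. Setting the right-hand side equal to $\delta$ and solving for $\epsilon$ gives $\epsilon=R\sqrt{2\log(2/\delta)/B}$, which is the equivalent high-probability form.

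For sampling without replacement, I would invoke Hoeffding's 1963 observation (or Serfling's inequality) that sums drawn without replacement from a finite population satisfy the same bound as sums drawn with replacement. Serfling's inequality sharpens this by a factor $(1-(B-1)/N)$ in the exponent's denominator, which is the finite-population correction mentioned in the statement.

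There is no real obstacle here. The only point that needs care is stating the conditioning correctly, so that the $Z_k$ are genuinely i.i.d.\ with mean equal to the empirical inner product. The without-replacement extension relies on citing the convex-ordering argument rather than reproving it.
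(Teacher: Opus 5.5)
Your proof is correct and is essentially the argument the paper intends. The paper states the lemma without a separate proof and relies on the same Hoeffding computation as in \cref{thm:finite_sample}; you apply that computation conditionally on $\mathcal{D}$ with i.i.d.\ uniform indices, so that $\mathbb{E}[Z_k]=\langle \hat f_{\mathrm{aug}},\phi_j\rangle_{\mathcal{D}}$, and your treatment of conditioning on the current iterate and your Hoeffding/Serfling justification of the without-replacement claim fill in details the paper leaves implicit.
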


\paragraph{Implication for optimization.}
\Cref{lem:minibatch_deviation} shows that the minibatch inner products concentrate around their full-data counterparts at rate $\mathcal{O}(B^{-1/2})$. 
In the default configuration the orthogonality penalty is evaluated on the full training set each step (all trajectories fit into one batch). 
For completeness, we record a standard concentration bound for minibatch estimation.

\section{When Residual Pipelines Can Reintroduce Symbolic Overlap}
\label{app:sequential_extension}

We revisit the setting of \cref{ex:intro_sparsity_overlap}. Let
\(\phi_1,\phi_2\in\mathcal F\) be orthonormal with respect to
\(\langle\cdot,\cdot\rangle_{\mathcal D}\), let
\(f=\phi_1+\phi_2\), and let the Stage-1 symbolic model be fitted with an
\(L^1\) penalty of weight \(\mu\in(0,2)\). The orthonormal Lasso solution is
\[
\hat w_j = 1-\mu/2,\qquad j=1,2,
\]
and therefore leaves the empirical residual
\[
r
:= f-\sum_{j=1}^2 \hat w_j\phi_j
= \frac{\mu}{2}(\phi_1+\phi_2)
\in \operatorname{span}\{\phi_1,\phi_2\}.
\]
Thus, the residue created by sparsity regularization lies exactly in the
symbolic span. The question is whether a neural residual fitted after, or
through, the symbolic component is forced to remain orthogonal to this span.

\begin{proposition}[Unconstrained residual learning and compositional pipelines can retain symbolic overlap]
\label{prop:seq_two_stage_failure}
In the setting above, the following hold.
\begin{enumerate}[leftmargin=*]
\item \textbf{Unconstrained residual learning.}
If \(w\) is fitted by Lasso and an unconstrained augmentation \(g\in\mathcal F\)
is subsequently fitted to the residual, then
\[
\hat g=r,
\qquad
\langle \hat g,\phi_j\rangle_{\mathcal D}=\mu/2,
\qquad j=1,2.
\]

\item \textbf{Compositional architecture with an input-space penalty.}
Consider a compositional architecture
\[
\hat f(x) = (I+N_\vartheta)(T_w(x))
          = T_w(x)+N_\vartheta(T_w(x)),
\qquad
T_w(x)=\sum_{j=1}^2 w_j\phi_j(x).
\]
For this simplified example, $T_w$ denotes the symbolic component itself; 
in compositional models with a residual structure $\hat f = T_w + N_\vartheta\circ T_w$, $T_w$ is the physics map onto which $N_\vartheta$ composes.
Define the effective additive augmentation \emph{at the Lasso solution} $\hat w$
\[
\widetilde N_\vartheta(x):=N_\vartheta(T_{\hat w}(x)).
\]
Assume that, at the Lasso solution \(\hat w_j=1-\mu/2\), the \(2N\) points
\[
\{x_i\}_{i=1}^N \cup \{T_{\hat w}(x_i)\}_{i=1}^N
\]
are all distinct. Then there exists a sufficiently expressive \(N_\vartheta\)
such that
\[
\hat f(x_i)=f(x_i),\qquad
N_\vartheta(x_i)=0,
\qquad
\widetilde N_\vartheta(x_i)=r(x_i)
\]
for all \(i=1,\ldots,N\). Consequently,
\[
\sum_i N_\vartheta(x_i)^\top \phi_j(x_i)=0,\qquad j=1,2,\quad\text{but}\quad
\langle \widetilde N_\vartheta,\phi_j\rangle_{\mathcal D}
=
\langle r,\phi_j\rangle_{\mathcal D}
=
\mu/2.
\]
Thus, an OrthoReg-style penalty applied to \(N_\vartheta\) at the original
inputs \(\{x_i\}\) does not control the additive overlap of the effective
augmentation \(N_\vartheta\circ T_{\hat w}\).
\end{enumerate}
\end{proposition}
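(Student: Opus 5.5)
Both parts reduce to computations in the orthonormal pair $\phi_1,\phi_2$ under $\langle\cdot,\cdot\rangle_{\mathcal D}$; the only genuine work is an interpolation construction in part~2.

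\textbf{Part 1.} I first recompute the Stage-1 Lasso. Because $\phi_1,\phi_2$ are orthonormal, $\|f-w_1\phi_1-w_2\phi_2\|_{\mathcal D}^2=(1-w_1)^2+(1-w_2)^2$, so the problem splits into two scalar problems $\min_{w_j}(1-w_j)^2+\mu|w_j|$. Soft-thresholding gives $\hat w_j=1-\mu/2$ for $\mu\in(0,2)$, hence $r=\tfrac{\mu}{2}(\phi_1+\phi_2)$.

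Next, the unconstrained Stage-2 fit $\min_{g\in\mathcal F}\|r-g\|_{\mathcal D}^2$ is minimized at $g=r$ with value $0$. The appendix setup assumes $\mathcal F$ is linear and contains $f-f_{\mathrm{phy}}$, so $r\in\mathcal F$. Any minimizer agrees with $r$ on $\mathcal D$, so $\hat g=r$ up to $\mathcal D$-equivalence. The inner products then follow from orthonormality: $\langle\hat g,\phi_j\rangle_{\mathcal D}=\tfrac{\mu}{2}$.

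\textbf{Part 2.} Fix $w=\hat w$ and let $z_i:=T_{\hat w}(x_i)=(1-\mu/2)f(x_i)$. By the distinctness assumption, the $2N$ points $\{x_i\}\cup\{z_i\}$ are pairwise distinct, so the following interpolation conditions are consistent:
\[
N_\vartheta(z_i)=r(x_i),\qquad N_\vartheta(x_i)=0,\qquad i=1,\dots,N.
\]
I will argue that a sufficiently expressive class, such as a wide MLP or any class able to interpolate finitely many distinct points, contains such an $N_\vartheta$. Concretely, a finite sum of bump functions centered at the $2N$ points, with supports small enough to be disjoint, works; universal approximation or exact finite-sample interpolation by width-$2N$ networks makes this rigorous.

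The conclusions then follow directly:
\begin{itemize}
\item $\hat f(x_i)=T_{\hat w}(x_i)+N_\vartheta(z_i)=(f-r)(x_i)+r(x_i)=f(x_i)$.
\item $\widetilde N_\vartheta(x_i)=N_\vartheta(z_i)=r(x_i)$.
\item $N_\vartheta(x_i)=0$, so $\sum_i N_\vartheta(x_i)^\top\phi_j(x_i)=0$.
\item $\langle\widetilde N_\vartheta,\phi_j\rangle_{\mathcal D}=\langle r,\phi_j\rangle_{\mathcal D}=\mu/2$, using only the values on $\mathcal D$ and Part~1.
\end{itemize}

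\textbf{Main obstacle.} The delicate step is the existence of an interpolating $N_\vartheta$ within the network family. The distinctness hypothesis is exactly what makes the prescribed values at the $x_i$ and the $z_i$ non-conflicting. I would discharge this step with the bump-function construction, or by citing finite-sample memorization results for MLPs. All remaining steps are routine algebra with orthonormality.
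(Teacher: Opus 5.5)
Your proposal is correct and follows the paper's proof essentially step for step. Decoupled soft-thresholding gives $\hat w_j=1-\mu/2$ and $r=\tfrac{\mu}{2}(\phi_1+\phi_2)$, the unconstrained Stage~2 fit returns $r$, and in Part~2 the distinctness of the $2N$ points lets $N_\vartheta$ interpolate $N_\vartheta(T_{\hat w}(x_i))=r(x_i)$ and $N_\vartheta(x_i)=0$ at the same time. Your disjoint-bump construction, which gives exact interpolation where plain universal approximation would not, and your remark that $\hat g=r$ only up to $\mathcal D$-equivalence simply make explicit what the paper leaves to ``a sufficiently expressive $N_\vartheta$''.
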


\begin{proof}
For unconstrained residual learning, orthonormality gives the usual soft-thresholding solution \(\hat w_j=(1-\mu/2)_+\). 
Since \(\mu\in(0,2)\), this gives \(\hat w_j=1-\mu/2\), and an unconstrained augmentation fits the residual exactly:
\[
\hat g=f-\sum_{j=1}^2\hat w_j\phi_j
=\frac{\mu}{2}(\phi_1+\phi_2).
\]
Taking empirical inner products with \(\phi_1\) and \(\phi_2\) gives
\(\langle\hat g,\phi_j\rangle_{\mathcal D}=\mu/2\).

For the compositional case, exact fit at the Lasso coefficients requires
\[
N_\vartheta(T_{\hat w}(x_i))
=
f(x_i)-T_{\hat w}(x_i)
=
r(x_i).
\]
This constrains \(N_\vartheta\) at the transported points \(T_{\hat w}(x_i)\),
not at the original data points \(x_i\). Since the \(2N\) points are distinct,
a sufficiently expressive \(N_\vartheta\) can interpolate both
\[
N_\vartheta(T_{\hat w}(x_i))=r(x_i)
\quad\text{and}\quad
N_\vartheta(x_i)=0.
\]
The empirical penalty on \(N_\vartheta(x_i)\) is then zero, while the effective
additive augmentation satisfies
\[
\widetilde N_\vartheta(x_i)
=
N_\vartheta(T_{\hat w}(x_i))
=
r(x_i),
\]
and hence
\(\langle\widetilde N_\vartheta,\phi_j\rangle_{\mathcal D}=\mu/2\).
\end{proof}

\paragraph{Orthogonality-regularised residual learning.}
The failure above is not caused by staging itself, but by fitting the residual without an orthogonality constraint. If Stage 2 instead minimises
\[
\|r-g\|_{\mathcal D}^2+\lambda\sum_j\langle g,\phi_j\rangle_{\mathcal D}^2,
\]
then, on the orthonormal toy, $\langle\hat g,\phi_j\rangle_{\mathcal D} = (\mu/2)/(1+\lambda) \to 0$ as $\lambda\to\infty$. 
Orthogonalisation therefore suppresses symbolic--neural overlap whether imposed jointly or in an explicitly regularised residual stage; 
it does not remove the $L^1$ shrinkage bias on $w$, so the shrinkage residue is left as fit error rather than absorbed by the neural component.

\section{Computational Complexity}
\label{app:complexity}

We briefly discuss the additional computational overhead introduced by the orthogonality penalty.

\paragraph{Per-iteration cost.}
Let $B$ be the batch size, $n$ the state dimension, and $M$ the number of library terms.
For the vector-field regression loss used in the main experiments, computing the penalty requires evaluating $\phi_j(x_i)$ and forming inner products $\hat f_{\mathrm{aug}}(x_i)^\top \phi_j(x_i)$ for all $i\in[B]$ and $j\in[M]$, which costs $\mathcal{O}(B\,n\,M)$ arithmetic operations in addition to the usual network forward/backward passes; 
backpropagation has the same order.
Rollout or sequence-prediction losses multiply this cost by the number of evaluated transitions or time steps.

\paragraph{Timing protocol.}
Wall-clock per epoch is measured as the median over a fixed window of training epochs after a short warm-up, on the modified damped pendulum ($B=1000$, $n=2$, $M=20$), with identical optimizer state and data loaders across configurations.
The reported $5$--$15\%$ overhead is the relative increase of the OrthoReg median over the Hybrid $L^2$ median across five seeds, isolating the cost of the orthogonality term $g_j$; overhead grows linearly in $M$ at fixed $(B,n)$.

\paragraph{Memory requirements.}
The additional memory is dominated by storing the evaluated library features on the batch, which scales as $\mathcal{O}(B\,n\,M)$ floats.
For example, with $B=1000$, $n=2$, and $M=20$, this corresponds to approximately $4{\times}10^4$ scalars (about $160$\,KB in float32); on the SIR system ($M=45$) the same calculation gives roughly $360$\,KB, well below the network activations cached for backpropagation.

\paragraph{Mini-batch gradient bias.}
The penalty $\sum_j\langle\hat f_{\mathrm{aug}},\phi_j\rangle_{\mathcal D}^2$ is a squared mean of bounded random variables, so the standard one-batch unbiased Monte-Carlo estimator of $\nabla_\vartheta\,\langle\hat f_{\mathrm{aug}},\phi_j\rangle_{\mathcal D}$ (which would multiply two independent mini-batch estimates) is replaced in our implementation by the squared single-batch average. Under stochastic gradient descent this introduces an additive bias of order $\mathcal O(1/B)$ relative to the full-data quantity (exactly the variance of the per-batch inner-product estimate); the full-batch evaluation used in the main experiments ($B$ equal to the full training set, see \cref{app:exp_design}) makes this bias zero by construction, and the mini-batch ablation in \cref{app:mc_sampling_abl} confirms that the ranking and quantitative trends carry over for $B\ge 200$. A two-batch unbiased estimator is a drop-in alternative for memory-constrained settings.

\paragraph{Hardware and total compute.}
All reported experiments were run on an internal HPC cluster, allocating one general-purpose GPU per job, $\approx 64$\,GB of RAM, and a small CPU allocation (typically $\le 8$ cores). The workloads are intentionally small---autonomous ODEs in $n\le 4$ states, MLPs of width $64$--$128$, $2000$ Adam epochs, and batch sizes $\le 1000$---so that OrthoReg remains CPU-tractable; we used GPUs primarily to accelerate the neural ODE component, which dominates wall-clock time.
Each individual training run completes within a small number of GPU-minutes; the full reported grid (main systems, ablations, and cross-system validation, each repeated over five seeds) accounts for the bulk of the project's compute, and preliminary tuning and discarded runs add a similar order-of-magnitude overhead on top.

\newpage
\section{Additional Ablations}
\label{app:additional_experiments}

\subsection{Full Dataset Difficulty Ablation}
\label{app:dataset_diff}

\begin{table}[h]
\centering

\small
\setlength{\tabcolsep}{5pt}  
\caption{
Dataset difficulty ablation across missing dynamics regimes (mean effect strength absent from the symbolic library). 
OrthoReg's gains are largest at medium difficulty; in the low-difficulty regime it trades ID and OOD-T2 accuracy for higher F1 and OOD-T3 generalization.}
\label{tab:difficulty_ablation_streamlined}
\begin{tabular}{lccc}
\toprule
\textbf{Metric} & \textbf{Pure} & \textbf{$L^2$} & \textbf{OrthoReg} \\
\midrule
\multicolumn{4}{c}{\textbf{Low Difficulty ($\beta_1=0.1, \beta_2=0.08, \beta_3=0.05$)}} \\
\midrule
MSE(${\scriptstyle \dot{x}_\mathrm{ID},\ \mathrm{e}{-4}}$) {\scalebox{0.8}{($\downarrow$)}}
  & $\bm{7.77 {\scalebox{0.8}{$\pm 1.64$}}}$
  & $10.99 {\scalebox{0.8}{$\pm 0.83$}}$
  & $26.82 {\scalebox{0.8}{$\pm 5.00$}}$ \\
MSE(${\scriptstyle \dot{x}_\mathrm{OOD,T2},\ \mathrm{e}{-3}}$) {\scalebox{0.8}{($\downarrow$)}}
  & $\bm{1.53 {\scalebox{0.8}{$\pm 0.26$}}}$
  & $2.01 {\scalebox{0.8}{$\pm 0.00$}}$
  & $2.79 {\scalebox{0.8}{$\pm 0.09$}}$ \\
MSE(${\scriptstyle \dot{x}_\mathrm{OOD,T3},\ \mathrm{e}{0}}$) {\scalebox{0.8}{($\downarrow$)}}
  & $161.60 {\scalebox{0.8}{$\pm 166.18$}}$
  & $5.86 {\scalebox{0.8}{$\pm 0.01$}}$
  & $\bm{0.58 {\scalebox{0.8}{$\pm 0.06$}}}$ \\
F1 {\scalebox{0.8}{($\uparrow$)}}
  & $0.50 {\scalebox{0.8}{$\pm 0.09$}}$
  & $0.72 {\scalebox{0.8}{$\pm 0.05$}}$
  & $\bm{0.86 {\scalebox{0.8}{$\pm 0.00$}}}$ \\ 
\midrule
\multicolumn{4}{c}{\textbf{Medium Difficulty ($\beta_1=0.3, \beta_2=0.25, \beta_3=0.15$)}} \\
\midrule
MSE(${\scriptstyle \dot{x}_\mathrm{ID},\ \mathrm{e}{-2}}$) {\scalebox{0.8}{($\downarrow$)}}
  & $\bm{1.27 {\scalebox{0.8}{$\pm 0.07$}}}$
  & $1.41 {\scalebox{0.8}{$\pm 0.05$}}$
  & $1.89 {\scalebox{0.8}{$\pm 0.01$}}$ \\
MSE(${\scriptstyle \dot{x}_\mathrm{OOD,T2},\ \mathrm{e}{-2}}$) {\scalebox{0.8}{($\downarrow$)}}
  & $\bm{1.15 {\scalebox{0.8}{$\pm 0.05$}}}$
  & $1.27 {\scalebox{0.8}{$\pm 0.04$}}$
  & $1.54 {\scalebox{0.8}{$\pm 0.02$}}$ \\
MSE(${\scriptstyle \dot{x}_\mathrm{OOD,T3},\ \mathrm{e}{0}}$) {\scalebox{0.8}{($\downarrow$)}}
  & $940.60 {\scalebox{0.8}{$\pm 614.71$}}$
  & $218.49 {\scalebox{0.8}{$\pm 111.32$}}$
  & $\bm{2.45 {\scalebox{0.8}{$\pm 0.43$}}}$ \\
F1 {\scalebox{0.8}{($\uparrow$)}}
  & $0.43 {\scalebox{0.8}{$\pm 0.09$}}$
  & $0.61 {\scalebox{0.8}{$\pm 0.03$}}$
  & $\bm{0.93 {\scalebox{0.8}{$\pm 0.15$}}}$ \\
\midrule
\multicolumn{4}{c}{\textbf{High Difficulty ($\beta_1=2.0, \beta_2=2.0, \beta_3=2.0$)}} \\
\midrule
MSE(${\scriptstyle \dot{x}_\mathrm{ID},\ \mathrm{e}{-2}}$) {\scalebox{0.8}{($\downarrow$)}}
  & $3.91 {\scalebox{0.8}{$\pm 0.13$}}$
  & $3.90 {\scalebox{0.8}{$\pm 0.11$}}$
  & $\bm{3.86 {\scalebox{0.8}{$\pm 0.09$}}}$ \\
MSE(${\scriptstyle \dot{x}_\mathrm{OOD,T2},\ \mathrm{e}{-2}}$) {\scalebox{0.8}{($\downarrow$)}}
  & $4.49 {\scalebox{0.8}{$\pm 0.17$}}$
  & $4.46 {\scalebox{0.8}{$\pm 0.10$}}$
  & $\bm{4.43 {\scalebox{0.8}{$\pm 0.09$}}}$ \\
MSE(${\scriptstyle \dot{x}_\mathrm{OOD,T3},\ \mathrm{e}{-1}}$) {\scalebox{0.8}{($\downarrow$)}}
  & $4.49 {\scalebox{0.8}{$\pm 2.61$}}$
  & $4.06 {\scalebox{0.8}{$\pm 0.75$}}$
  & $\bm{2.64 {\scalebox{0.8}{$\pm 0.48$}}}$ \\
F1 {\scalebox{0.8}{($\uparrow$)}}
  & $0.46 {\scalebox{0.8}{$\pm 0.05$}}$
  & $0.43 {\scalebox{0.8}{$\pm 0.04$}}$
  & $\bm{0.52 {\scalebox{0.8}{$\pm 0.07$}}}$ \\
\bottomrule
\end{tabular}
\begin{flushleft}
\footnotesize
$^{\dagger}$For Medium, OOD T3 reports derivative MSE instead of state MSE due to numerical instabilities during trajectory integration for baseline methods. Low and High report state MSE for OOD T3.
\end{flushleft}
\end{table}

\Cref{tab:difficulty_ablation_streamlined} reports the full dataset difficulty ablation across regimes of increasing missing dynamics (controlled by the mean effect strength $\beta$ of terms absent from the symbolic library; cf.~$\beta_i$ in \cref{eq:pend}). 
In the main paper, \cref{fig:ablations}a summarizes the central trends using stress-test generalization ($x_{\mathrm{OOD,T3}}$) and symbolic recovery (F1). 
The complete table additionally includes in-distribution derivative accuracy ($\dot{x}_{\mathrm{ID}}$) and a milder OOD evaluation without parameter shifts ($x_{\mathrm{OOD,T2}}$).

\paragraph{Low difficulty ($\beta_1=0.1, \beta_2=0.08, \beta_3=0.05$).}
When the true dynamics are close to the symbolic library, all methods achieve low errors on in-distribution and mild OOD evaluations (\cref{tab:difficulty_ablation_streamlined}, first block), and the hybrid component is not strongly required. 
Interestingly, the most flexible baselines can remain competitive on these simpler evaluations, consistent with the observation that regularization is least critical when the library is largely sufficient. 
Under the stress-test setting ($x_{\mathrm{OOD,T3}}$), however, OrthoReg remains substantially more robust (\cref{fig:ablations}a), indicating that enforcing a clean split can mitigate extrapolation failures even when the library mismatch is small.

\paragraph{Medium difficulty ($\beta_1=0.3, \beta_2=0.25, \beta_3=0.15$).}
This regime shows the most pronounced separation between methods.
OrthoReg simultaneously improves OOD performance and symbolic recovery, yielding markedly lower $x_{\mathrm{OOD,T3}}$ error and substantially higher F1 than both the unregularized baseline and $L^2$ regularization (\cref{fig:ablations}a; \cref{tab:difficulty_ablation_streamlined}, second block).
Conceptually, the orthogonality penalty constrains the augmentation to capture residual structure without redundantly re-learning library terms, stabilising identification and improving generalisation.

\paragraph{High difficulty ($\beta_1=2.0, \beta_2=2.0, \beta_3=2.0$).}
When the missing dynamics strongly exceed the representational capacity of the library, all methods deteriorate in symbolic recovery and OOD performance (\cref{tab:difficulty_ablation_streamlined}, third block), reflecting that the symbolic--neural split becomes intrinsically harder to recover. 
Nevertheless, OrthoReg retains a modest advantage on $x_{\mathrm{OOD,T3}}$ and F1 (\cref{fig:ablations}a). 
In this regime, once the residual dominates, orthogonality alone is insufficient to recover interpretable structure and the symbolic--neural split becomes harder to identify.

\begin{table}[h]
\centering
\small
\setlength{\tabcolsep}{5pt}  
\caption{Sampling scheme ablation (regular vs. irregular). All methods degrade under irregular sampling; OrthoReg retains a small relative advantage on F1.}

\label{tab:sampling_ablation_streamlined}
\begin{tabular}{lccc}
\toprule
\textbf{Metric} & \textbf{Pure} & \textbf{$L^2$} & \textbf{OrthoReg} \\
\midrule
\multicolumn{4}{c}{\textbf{Regular Sampling}} \\
\midrule
MSE(${\scriptstyle \dot{x}_\mathrm{ID},\ \mathrm{e}{-2}}$) {\scalebox{0.8}{($\downarrow$)}} 
  & $\bm{1.27 {\scalebox{0.8}{$\pm 0.07$}}}$ 
  & $1.41 {\scalebox{0.8}{$\pm 0.05$}}$ 
  & $1.89 {\scalebox{0.8}{$\pm 0.01$}}$ \\
MSE(${\scriptstyle x_\mathrm{OOD,T2},\ \mathrm{e}{0}}$) {\scalebox{0.8}{($\downarrow$)}} 
  & $1.02 {\scalebox{0.8}{$\pm 0.07$}}$ 
  & $1.07 {\scalebox{0.8}{$\pm 0.02$}}$ 
  & $\bm{1.01 {\scalebox{0.8}{$\pm 0.01$}}}$ \\
MSE(${\scriptstyle \dot{x}_\mathrm{OOD,T3},\ \mathrm{e}{2}}$) {\scalebox{0.8}{($\downarrow$)}} 
  & $34.00 {\scalebox{0.8}{$\pm 22.22$}}$ 
  & $7.90 {\scalebox{0.8}{$\pm 4.02$}}$ 
  & $\bm{0.02 {\scalebox{0.8}{$\pm 0.00$}}}$ \\
F1 {\scalebox{0.8}{($\uparrow$)}} 
  & $0.43 {\scalebox{0.8}{$\pm 0.09$}}$ 
  & $0.61 {\scalebox{0.8}{$\pm 0.03$}}$ 
  & $\bm{0.93 {\scalebox{0.8}{$\pm 0.15$}}}$ \\
\midrule
\multicolumn{4}{c}{\textbf{Irregular Sampling}} \\
\midrule
MSE(${\scriptstyle \dot{x}_\mathrm{ID},\ \mathrm{e}{0}}$) {\scalebox{0.8}{($\downarrow$)}} 
  & $3.52 {\scalebox{0.8}{$\pm 0.00$}}$ 
  & $3.52 {\scalebox{0.8}{$\pm 0.00$}}$ 
  & $\bm{3.52 {\scalebox{0.8}{$\pm 0.00$}}}$ \\
MSE(${\scriptstyle x_\mathrm{OOD,T2},\ \mathrm{e}{0}}$) {\scalebox{0.8}{($\downarrow$)}} 
  & $3.85 {\scalebox{0.8}{$\pm 0.00$}}$ 
  & $3.85 {\scalebox{0.8}{$\pm 0.00$}}$ 
  & $\bm{3.85 {\scalebox{0.8}{$\pm 0.00$}}}$ \\
MSE(${\scriptstyle \dot{x}_\mathrm{OOD,T3},\ \mathrm{e}{1}}$) {\scalebox{0.8}{($\downarrow$)}} 
  & $4.04 {\scalebox{0.8}{$\pm 0.09$}}$ 
  & $3.90 {\scalebox{0.8}{$\pm 0.10$}}$ 
  & $\bm{3.74 {\scalebox{0.8}{$\pm 0.05$}}}$ \\
F1 {\scalebox{0.8}{($\uparrow$)}} 
  & $0.30 {\scalebox{0.8}{$\pm 0.01$}}$ 
  & $0.30 {\scalebox{0.8}{$\pm 0.01$}}$ 
  & $\bm{0.31 {\scalebox{0.8}{$\pm 0.01$}}}$ \\
\bottomrule
\end{tabular}
\end{table}

Overall, \Cref{tab:difficulty_ablation_streamlined} complements \cref{fig:ablations}a by showing that OrthoReg's strongest gains arise when the system \emph{partially} exceeds the library: small mismatches do not demand strong regularization, whereas extreme mismatches limit recoverability for all methods.

\subsection{Full Sampling Scheme Ablation}
\label{app:sampling_ablation}

\Cref{tab:sampling_ablation_streamlined} reports the full sampling scheme ablation across regular and irregular observation settings. 
Irregular sampling substantially increases derivative and OOD prediction errors and reduces symbolic recovery across all methods, consistent with increased solver error and gradient variance under non-uniform time steps (see \cref{app:exp_design}). 
Despite this degradation, OrthoReg retains a marginal advantage in symbolic recovery (F1) compared to baseline methods. 
These results complement the trends shown in \cref{fig:ablations}b and indicate that orthogonal regularization remains effective for symbolic recovery beyond idealized sampling assumptions.

\subsection{Monte Carlo Sampling Ablation}
\label{app:mc_sampling_abl}
We investigate the impact of Monte Carlo sampling on model performance by varying the number of training samples from 100 to 5000. \Cref{fig:n_samples_ablation} shows the performance across different sample sizes for the medium missing dynamics regime ($\beta_1=0.3, \beta_2=0.25, \beta_3=0.15$).

\begin{figure}[!htbp]
\centering
\includegraphics[width=\linewidth]{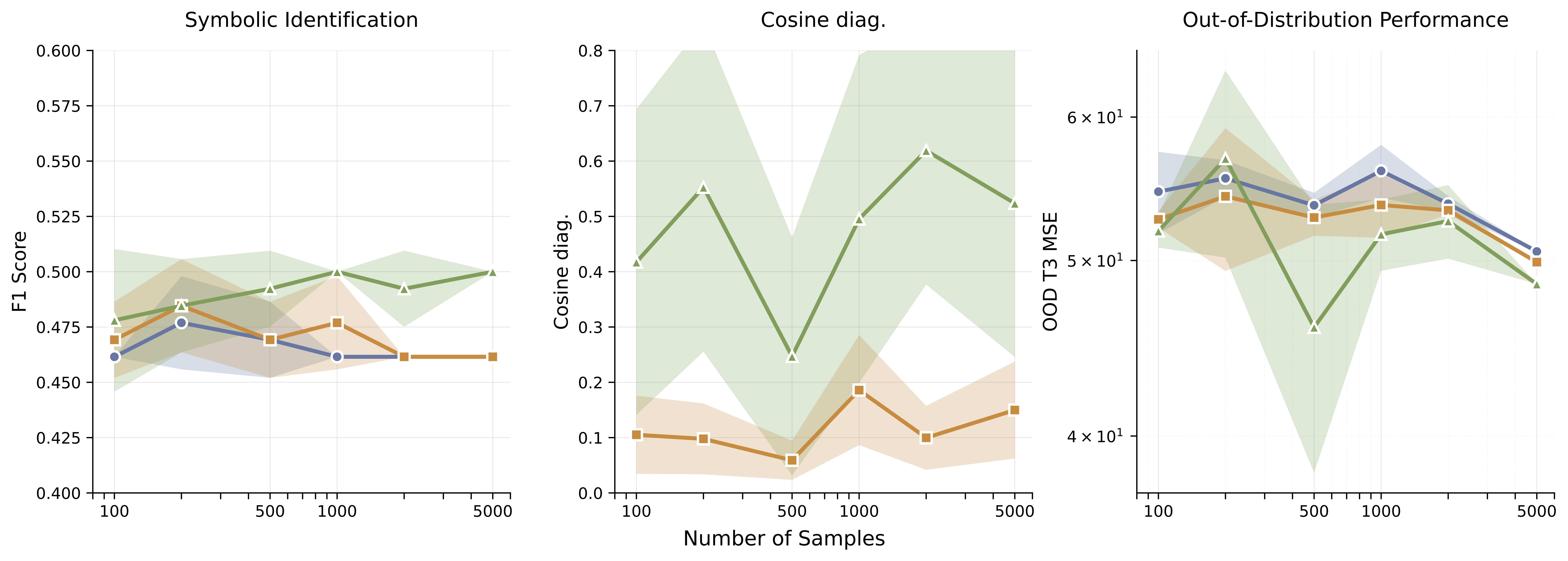}
\caption{Monte Carlo sampling ablation study (medium missing dynamics).
Performance is shown across different sample sizes (100--5000) for F1 score, $\mathrm{OOD,T2}$ MSE, and a cosine diagnostic
$\bigl|\langle \hat f_{\rm aug}, \hat f_{\rm phy}\rangle_{\mathcal D}\bigr| / (\|\hat f_{\rm aug}\|_{\mathcal D}\|\hat f_{\rm phy}\|_{\mathcal D})$
between the learned augmentation and the assembled symbolic prediction $\hat f_{\rm phy}=\sum_j \hat w_j \phi_j$ on the empirical training distribution.
The cosine takes values in $[0,1]$: smaller values mean $\hat f_{\rm aug}$ and $\hat f_{\rm phy}$ are more directionally separated under the empirical inner product, larger values mean they share direction.
We report it to visualise the concentration result of \cref{lem:minibatch_deviation}: at rate $\mathcal O(N^{-1/2})$ the empirical inner products that drive the OrthoReg penalty concentrate around their full-data values, and the cosine -- built from those same inner products -- stabilises along with them as $N$ grows.
The cosine is a within-method stability diagnostic for a single OrthoReg run; it is built from the empirical inner products that appear in the OrthoReg training penalty $\sum_j \langle \hat f_{\rm aug}, \phi_j\rangle_{\mathcal D}^2$ but is not the penalty itself.}
\label{fig:n_samples_ablation}
\end{figure}

F1 and the cosine diagnostic both stabilise as the number of training samples grows for OrthoReg, consistent with the $\mathcal O(N^{-1/2})$ concentration of the empirical inner products that the OrthoReg penalty operates on (\cref{lem:minibatch_deviation}).


\subsection{Noise Robustness Ablation}
\label{app:noise_robustness}

We investigate the robustness of OrthoReg, $L^2$ hybrid, and Pure Symbolic methods to measurement noise in the training data. Gaussian noise with standard deviation $\varepsilon \in \{0.0, 0.01, 0.05, 0.1, 0.2, 0.5\}$ is added to the state observations after trajectory generation, simulating realistic measurement uncertainty.

\begin{figure}[h]
\centering
\includegraphics[width=\linewidth]{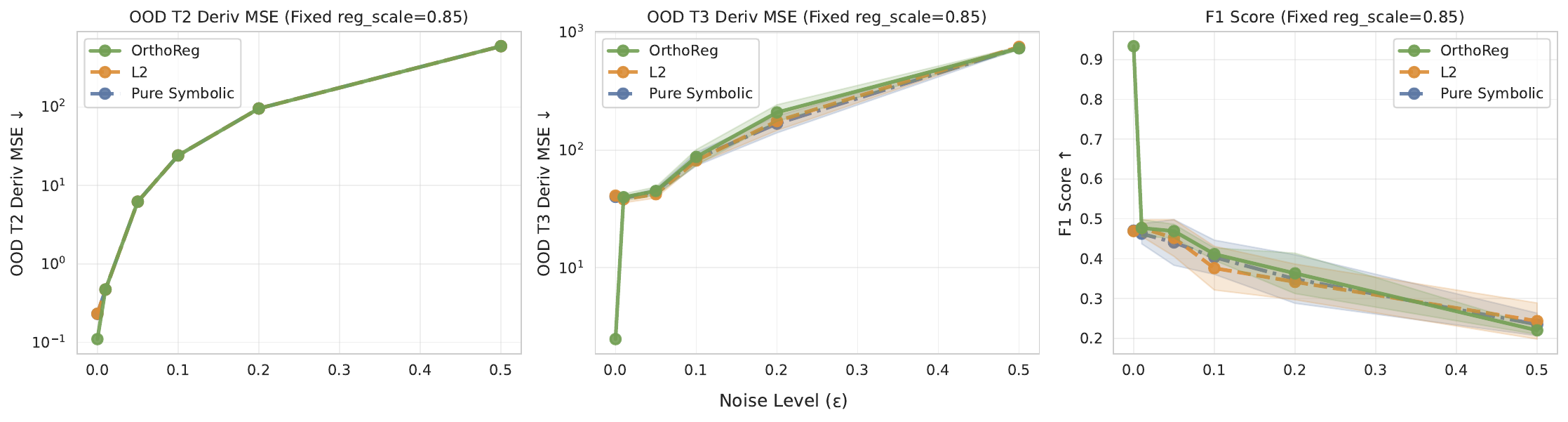}
\caption{Noise robustness study on the damped pendulum system. 
Performance degrades similarly across all methods as noise increases. 
OrthoReg maintains its advantage at zero noise (superior $\mathrm{OOD,T2}$ and $\mathrm{OOD,T3}$ performance, higher F1 score) but does not provide special robustness to noise compared to baselines. 
All methods converge to similar performance levels at high noise ($\varepsilon \geq 0.1$), indicating that measurement noise fundamentally limits symbolic recovery regardless of regularization approach.
Curves correspond, in order, to OrthoReg, $L^2$-regularised hybrid, and Pure Symbolic.
}
\label{fig:noise_robustness}
\end{figure}

\Cref{fig:noise_robustness} shows that all three methods suffer similarly from increasing noise levels. 
OrthoReg maintains its performance advantages at zero noise (lower $\mathrm{OOD,T2}$ and $\mathrm{OOD,T3}$ derivative MSE, higher F1 score), but these advantages diminish rapidly as noise increases. 
At noise levels $\varepsilon \geq 0.1$, all methods converge to similar performance, with $\mathrm{OOD,T2}$ derivative MSE increasing from $\sim 0.05$ (zero noise) to $\sim 0.6$ (high noise) and F1 scores dropping from $\sim 0.9$ to $\sim 0.2-0.4$ across all methods.

Overall, orthogonal regularization does not materially improve robustness to measurement noise in this setting; performance converges across methods at higher noise levels.
Improving robustness likely requires changes to the estimation procedure (e.g., denoising or noise-aware differentiation) rather than additional separation constraints.
At high noise ($\varepsilon\gtrsim 0.1$), the smoothed-finite-difference targets used to compute $y_i\approx\dot x(t_i)$ are themselves dominated by amplified noise (Savitzky--Golay window of length~5), so the convergence of all methods at this regime partly reflects target corruption rather than failure of the regularizer itself; an analytic-derivative ablation would disentangle the two and is left to future work (\cref{app:limitations}).


\section{Cross-System Validation}
\label{app:cross_system_validation}

The main experiments (\cref{sec:experiments}) evaluate OrthoReg on the modified pendulum and Duffing systems.
Here we assess transfer to additional systems that introduce different sources of library mismatch, focusing on Lotka--Volterra (temporal modulation) and SIR (state dependence and memory).
Together, these systems test three distinct mismatch regimes: structured feature mismatch (Duffing), non-autonomous perturbations (Lotka--Volterra), and strongly misspecified state-dependent residual structure (SIR).

\paragraph{Autonomous augmentation with non-autonomous data.}
\label{app:non_autonomy_caveat}
Both LV and SIR include explicit time-dependent factors in the ground-truth dynamics, while the augmentation $\hat f_{\rm aug}(x;\vartheta)$ in \cref{eq:orthoreg_objective} takes only the state $x$ as input.
An autonomous augmentation can therefore fit such factors only as a state-dependent envelope on the training horizon $[0,T]$; 
the OOD splits below test state extrapolation rather than time extrapolation.
Concatenating $t$ with $x$ in the network input is a one-line architectural change that would close this gap.

\subsection{Lotka--Volterra System: Temporal Coupling}
\label{app:subsec_lv}
We evaluate OrthoReg on a modified predator-prey system with temporally modulated and state-dependent interactions. The dynamics are:

\begin{align}
\frac{dx}{dt} &= \alpha x - \beta xy 
                 + \varepsilon_1 x \, 
                   \bigl(\sin(\omega_{\rm fast} t) \cos(\omega_{\rm fast} xy) \bigr) \notag 
                \times \sin\bigl(\omega_{\rm slow} (x+y)\bigr), \\
\frac{dy}{dt} &= \delta xy - \gamma y 
                 + \varepsilon_2 y \, 
                   \bigl(\sin(\omega_{\rm fast} t) \cos(\omega_{\rm fast} xy) \bigr) \notag
             \times \sin\bigl(\omega_{\rm slow} (x+y)\bigr) 
                         \sin\Bigl(\frac{x}{y+c_{\rm reg}}\Bigr).
\end{align}

Here, $\varepsilon_1$ and $\varepsilon_2$ control the strength of missing dynamics not captured by the symbolic feature library, while $c_{\rm reg}$ is a small constant that prevents division by zero. 
The runs reported below use $(\varepsilon_1,\varepsilon_2)=(3.0,2.0)$ (see the per-system reproducibility values in \cref{app:exp_design}). 
The augmented terms introduce high-frequency temporal modulation, state-dependent coupling, and asymmetric predator-prey interactions; 
the explicit $\sin(\omega_{\rm fast} t)$ factors make the system non-autonomous (\cref{app:non_autonomy_caveat}). 

\paragraph{Results and Analysis.}

\begin{table}[h]
\centering
\small
\setlength{\tabcolsep}{5pt}
\caption{Lotka--Volterra results including derivative metrics.
Scale factors shown inside the metric parentheses indicate the power of ten applied to the reported values (e.g., $\mathrm{e}{-2}$).
Uncertainty terms reported as $\pm 0.0$ fall below the displayed precision after rescaling.
Abbreviations: MSE = mean squared error; ID = in-distribution; OOD = out-of-distribution.}
\label{tab:lv_results_streamlined}
\begin{tabular}{lccc}
\toprule
\textbf{Metric} & \textbf{Pure} & \textbf{$L^2$} & \textbf{OrthoReg} \\
\midrule
\multicolumn{4}{c}{\textbf{Predictive Performance}} \\
\midrule
MSE(${\scriptstyle \dot{x}_\mathrm{ID},\ \mathrm{e}{-2}}$) {\scalebox{0.8}{($\downarrow$)}} 
  & $1.64\ {\scalebox{0.8}{$\pm 0.00$}}$ 
  & $1.64\ {\scalebox{0.8}{$\pm 0.00$}}$ 
  & $1.64\ {\scalebox{0.8}{$\pm 0.00$}}$ \\

MSE(${\scriptstyle \dot{x}_\mathrm{OOD,T2},\ \mathrm{e}{-2}}$) {\scalebox{0.8}{($\downarrow$)}} 
  & $1.21\ {\scalebox{0.8}{$\pm 0.00$}}$ 
  & $1.22\ {\scalebox{0.8}{$\pm 0.00$}}$ 
  & $1.22\ {\scalebox{0.8}{$\pm 0.00$}}$ \\

MSE(${\scriptstyle \dot{x}_\mathrm{OOD,T3},\ \mathrm{e}{-1}}$) {\scalebox{0.8}{($\downarrow$)}} 
  & $1.74\ {\scalebox{0.8}{$\pm 0.00$}}$ 
  & $1.73\ {\scalebox{0.8}{$\pm 0.00$}}$ 
  & $\bm{1.71\ {\scalebox{0.8}{$\pm 0.00$}}}$ \\

\midrule
\multicolumn{4}{c}{\textbf{System Identification Quality}} \\
\midrule
F1 {\scalebox{0.8}{($\uparrow$)}} 
  & $0.22\ {\scalebox{0.8}{$\pm 0.01$}}$ 
  & $0.22\ {\scalebox{0.8}{$\pm 0.00$}}$ 
  & $\bm{0.24\ {\scalebox{0.8}{$\pm 0.01$}}}$ \\

\#Terms {\scalebox{0.8}{($\downarrow$)}} 
  & $16.6\ {\scalebox{0.8}{$\pm 0.9$}}$ 
  & $16.0\ {\scalebox{0.8}{$\pm 0.0$}}$ 
  & $\bm{14.8\ {\scalebox{0.8}{$\pm 0.4$}}}$ \\

\bottomrule
\end{tabular}
\end{table}

Across all predictive metrics in \cref{tab:lv_results_streamlined}, the three methods perform similarly in this configuration, indicating that the chosen library and training distribution already capture most of the dynamics relevant to our evaluation protocol.
OrthoReg yields a small but consistent gain in identification quality (F1: $0.24$ vs.\ $0.22$) while selecting slightly fewer terms.
We therefore view Lotka--Volterra as a ``low headroom'' setting: it serves as a sanity check that OrthoReg does not degrade performance when the decomposition problem is comparatively easy, rather than as a regime with large expected improvements.

\subsection{Duffing oscillator: multistability and cross-basin generalization}
\label{app:subsec_duffing}

This appendix complements the main-text results in \cref{tab:duffing_results_streamlined} by detailing the experimental setup and the origin of the cross-basin generalization challenge. Our Duffing configuration and the motivation for the basin split follow \citet{goring2024out}. \looseness=-1

\paragraph{System and regimes.}
We consider the unforced Duffing oscillator in first-order form,
\begin{align}
\dot{x} &= y, \\
\dot{y} &= ay - x(b + cx^2),
\end{align}
with parameters $[a,b,c]=[-\tfrac{1}{2},-1,\tfrac{1}{10}]$, placing the dynamics in a bistable regime with two symmetric basins of attraction at
$x=\pm\sqrt{-b/c}$ (and $y=0$). \citet{goring2024out} use this multistable setting to study out-of-domain generalization for learned dynamical-system models. \looseness=-1
\paragraph{Why cross-basin generalization is challenging.}
When training data are restricted to a single basin, the model never observes trajectories from the opposite basin. For multistable flows that are \emph{not topologically transitive} on the test domain, \citet{goring2024out} formalize that the resulting OOD generalization problem is not strictly learnable for universal-approximator hypothesis classes (Theorem~4.2), and empirically illustrate this behavior on the Duffing system. \looseness=-1

\paragraph{Library mismatch.}
The shared symbolic library (polynomial degree $2$ $\oplus$ Fourier($n=1$); \cref{app:exp_design}) does not contain the cubic term $cx^3$, so any method using this library must either (i) approximate the missing nonlinearity through other terms (pure symbolic) or (ii) represent it through the neural augmentation (hybrid models). \looseness=-1

\paragraph{Effect of orthogonal regularization.}
OrthoReg encourages the neural augmentation to be (approximately) orthogonal to the span of library terms on the training distribution, encouraging the symbolic component to retain the structure expressible by the library while pushing the neural component toward dynamics not captured by it. 
Empirically, this is associated with improved cross-basin performance and more stable identification compared to $L^2$-regularised hybrids (\cref{tab:duffing_results_streamlined}). \looseness=-1

\paragraph{Numerical note on basin crossings.}
Although the unforced system is basin-confining in the idealized continuous dynamics, numerical integration and initial conditions close to the separatrix can yield occasional apparent basin crossings. In the qualitative visualization we therefore treat trajectories as illustrative, and we generate/inspect trajectories to ensure the intended train-test basin split is respected (cf.\ \cref{app:exp_design} for integration details). \looseness=-1

\subsection{SIR: State Dependence and Memory}
\label{app:subsec_sir}

\paragraph{System Design.}

We extend the classical normalised SIR model with a multiplicative \emph{tweaked} term per compartment that combines a state-dependent bilinear coupling, an exponentially decaying time envelope, and a state-modulated frequency. Concretely, with $\omega_S=\omega_{\rm fast}(1+S)$, $\omega_I=\omega_{\rm fast}(1+2I)$, $\omega_R=\omega_{\rm slow}(1+R)$,

\begin{align}
\frac{dS}{dt} &= -\beta\,\frac{SI}{S+I+R}
                 \;+\; \varepsilon_1\, S\,I\, \sin(\omega_S t)\, e^{-t/\tau_{\rm memory}}\, \tanh\!\bigl(\omega_{\rm slow}(I+R)\bigr), \\
\frac{dI}{dt} &= +\beta\,\frac{SI}{S+I+R} - \gamma\, I
                 \;+\; \varepsilon_2\, I\,R\, \sin(\omega_I t)\, e^{-t/\tau_{\rm memory}}\, \tanh\!\bigl(\omega_{\rm slow}(S+R)\bigr), \\
\frac{dR}{dt} &= +\gamma\, I
                 \;+\; \varepsilon_3\, S\,R\, \sin(\omega_R t)\, e^{-t/\tau_{\rm memory}}\, \tanh\!\bigl(\omega_{\rm slow}(S+I)\bigr),
\end{align}

with parameters $\beta=2.0$, $\gamma=1.0$, $(\varepsilon_1,\varepsilon_2,\varepsilon_3)=(0.3,0.25,0.2)$, $\omega_{\rm fast}=3.0$, $\omega_{\rm slow}=1.5$, $\tau_{\rm memory}=2.0$ (\cref{app:exp_design}). Each tweaked term is added (sign $+$) to the corresponding standard SIR derivative, so $\varepsilon_i\to 0$ recovers the classical normalised SIR model exactly.
The base rates $\beta,\gamma$ are constant; all additional state dependence enters through the tweaked residual terms \citep{hethcote2000mathematics, kucharski2020early}. 
As in LV, the explicit time factors make this a non-autonomous stress test for the autonomous augmentation (\cref{app:non_autonomy_caveat}).

\begin{table}[h]
\centering
\small
\setlength{\tabcolsep}{5pt}
\caption{
SIR results under severe library mismatch. 
OrthoReg improves OOD-T3 extrapolation and symbolic separation, but trades off ID and OOD-T2 accuracy.}
\label{tab:sir_results_streamlined}
\begin{tabular}{lccc}
\toprule
\textbf{Metric} & \textbf{Pure} & \textbf{$L^2$} & \textbf{OrthoReg} \\
\midrule
\multicolumn{4}{c}{\textbf{Predictive Performance}} \\
\midrule
MSE(${\scriptstyle \dot{x}_\mathrm{ID},\ \mathrm{e}{-5}}$) {\scalebox{0.8}{($\downarrow$)}} 
  & $\bm{2.12 {\scalebox{0.8}{$\pm 0.73$}}}$ 
  & $31.42 {\scalebox{0.8}{$\pm 4.93$}}$ 
  & $90.64 {\scalebox{0.8}{$\pm 7.56$}}$ \\

MSE(${\scriptstyle \dot{x}_\mathrm{OOD,T2},\ \mathrm{e}{-4}}$) {\scalebox{0.8}{($\downarrow$)}} 
  & $\bm{2.40 {\scalebox{0.8}{$\pm 0.11$}}}$ 
  & $7.75 {\scalebox{0.8}{$\pm 0.53$}}$ 
  & $16.54 {\scalebox{0.8}{$\pm 0.80$}}$ \\

MSE(${\scriptstyle \dot{x}_\mathrm{OOD,T3},\ \mathrm{e}{-4}}$) {\scalebox{0.8}{($\downarrow$)}} 
  & $72.68 {\scalebox{0.8}{$\pm 19.04$}}$ 
  & $15.33 {\scalebox{0.8}{$\pm 3.33$}}$ 
  & $\bm{8.76 {\scalebox{0.8}{$\pm 0.52$}}}$ \\

\midrule
\multicolumn{4}{c}{\textbf{System Identification Quality}} \\
\midrule
F1 {\scalebox{0.8}{($\uparrow$)}} 
  & $0.14 {\scalebox{0.8}{$\pm 0.03$}}$ 
  & $0.15 {\scalebox{0.8}{$\pm 0.03$}}$ 
  & $\bm{0.20 {\scalebox{0.8}{$\pm 0.06$}}}$ \\

\#Terms {\scalebox{0.8}{($\downarrow$)}} 
  & $40.2 {\scalebox{0.8}{$\pm 3.1$}}$ 
  & $20.4 {\scalebox{0.8}{$\pm 3.6$}}$ 
  & $\bm{6.8 {\scalebox{0.8}{$\pm 3.3$}}}$ \\

Orth. {\scalebox{0.8}{($\uparrow$)}} 
  & --
  & $0.35 {\scalebox{0.8}{$\pm 0.16$}}$ 
  & $\bm{0.55 {\scalebox{0.8}{$\pm 0.23$}}}$ \\
\bottomrule
\end{tabular}
\end{table}

The SIR system demonstrates a challenging regime where the symbolic library has a severe mismatch with the true dynamics, a large number of candidate terms are present, and the tweaked term combines state-modulated frequencies $\sin(\omega_k(\cdot)t)$ with a decaying time envelope $e^{-t/\tau_{\rm memory}}$ that the autonomous augmentation can absorb only as a state-dependent envelope (\cref{app:non_autonomy_caveat}).
While all methods are challenged by this setup, OrthoReg retains an advantage in the out-of-distribution setting (OOD-T3) and in system identification (highest F1, fewest terms).

\paragraph{Summary.}
Taken together, the cross-system experiments show that OrthoReg remains most useful when the library captures part of the dynamics but leaves structured residuals. 
The gains are modest in low-headroom settings such as LV, clearer for cross-basin Duffing generalisation, and mixed under the strongly misspecified SIR variant, where OrthoReg improves OOD-T3 and sparsity but sacrifices ID and OOD-T2 accuracy.


\section{Experimental Design and Implementation}
\label{app:exp_design}

\paragraph{Overview.}

All experiments follow a common protocol: we generate synthetic trajectories from known ground-truth dynamical systems, train models to predict the vector field from state observations, and evaluate predictive accuracy, symbolic recovery, and neural--symbolic separation.
To isolate the effect of orthogonal regularization, OrthoReg and the hybrid baselines share the same model structure and training pipeline; they differ only in the form of the regularization applied to the neural augmentation.

\subsection{Evaluation metrics}

We report two complementary result metrics:
\begin{itemize}
\item \textbf{Predictive performance.} Mean-squared error (MSE) on derivatives, computed against the ground-truth vector field (available for all synthetic systems), and, when numerically stable, MSE of long-horizon trajectory rollouts in state space.
\item \textbf{System identification.} F1 score comparing the recovered symbolic support against the ground-truth library terms after coefficient thresholding at $10^{-3}$.
\end{itemize}
A useful diagnostic is the exploratory cosine $|\langle \hat f_{\mathrm{aug}}, \hat f_{\mathrm{phy}}\rangle_{\mathcal D}|/(\|\hat f_{\mathrm{aug}}\|_{\mathcal D}\|\hat f_{\mathrm{phy}}\|_{\mathcal D})$ between the learned augmentation and the assembled symbolic prediction $\hat f_{\mathrm{phy}}=\sum_j w_j \phi_j$. 
This quantity is an indirect proxy for the OrthoReg training penalty $\sum_j \langle \hat f_{\mathrm{aug}}, \phi_j\rangle_{\mathcal D}^2$ (the two coincide up to a library reweighting only when $w$ concentrates on a single term) and is numerically unstable when either component has small empirical norm, so we do not report it as a result and do not use it to rank methods.

\subsection{In-distribution and out-of-distribution evaluation}
\label{app:ood_protocol}

We distinguish four evaluation regimes:
\begin{itemize}
\item \textbf{In-distribution (ID):} test trajectories drawn from the same distribution of initial conditions and time points as training;
\item \textbf{In-distribution, time-extrapolated (ID,ext):} same initial-condition distribution as training, evaluated past the training horizon $T$ on the autonomous continuation of the trajectory; this stresses temporal extrapolation while keeping the IC distribution fixed;
\item \textbf{OOD-T2 (initial-condition extrapolation, ``IC-Ext'').} trajectories initialized outside the training range of initial conditions;
\item \textbf{OOD-T3 (parametric extrapolation, ``Param-Ext'').} trajectories generated from perturbed system parameters while keeping the symbolic library fixed, probing extrapolation under model mismatch.
\end{itemize}

Tables retain the OOD-T2/OOD-T3 column labels for compactness; 
the regime/basin axis corresponds to the Duffing cross-basin split (\cref{app:subsec_duffing}). 
Hyperparameters, including $\lambda$, are selected on a validation split disjoint from these test splits and fixed across systems.

\subsection{Data generation and derivative targets}
\label{sec:data_generation}

Trajectories are generated by numerically integrating the ground-truth vector field using \texttt{scipy.integrate.odeint} (LSODA adaptive solver) and then sampled to obtain state observations $\{x(t_i)\}$.
Models are trained using the vector-field regression objective (\cref{eq:fit_loss_vf}) with derivative targets $y_i \approx \dot x(t_i)$ computed via smoothed finite differences (PySINDy \texttt{SmoothedFiniteDifference}, second-order accuracy with Savitzky--Golay smoothing, window length $5$).

\paragraph{Temporal discretization.}
For the pendulum experiments we simulate trajectories over a horizon $T=6.0$ with $n_T=100$ time points, yielding a step size $\Delta t = T/(n_T-1) \approx 0.0606$.
Other systems use their respective simulation horizons and discretizations as specified by the experimental configuration.

\subsection{Model classes and symbolic library}

\paragraph{Neural augmentation.}
Hybrid models use a multilayer perceptron with three hidden layers and $\tanh$ activations.
The hidden width is $128$ for hybrid models in the main experiments. The pure-neural baselines (PINN and Universal ODE) reported in \cref{tab:baseline} are trained at the same hidden width $128$, so the neural-capacity comparison is at parity.
Weights are initialized using standard variance-preserving schemes as implemented in the respective models.

\paragraph{Symbolic library.}
We use a fixed feature library combining a polynomial basis (degree $2$, with cross-terms for multivariate systems) and a Fourier basis ($n_{\mathrm{frequencies}}=1$, i.e.\ $\sin$ and $\cos$ at the unit frequency). 
Throughout, $M$ counts scalar feature--component pairs (basis features times state dimension $n$), so for the pendulum ($n=2$) with poly-2 $\oplus$ Fourier-1 we have $M=20$, and for SIR ($n=3$) the same construction gives $M=45$.
Symbolic coefficients are optimized with an $L^1$ penalty and thresholded at $10^{-3}$ when computing support recovery metrics.

\subsection{Optimization and effective batch size}

All models are optimized using Adam.
The derivative regression phase is trained for $2000$ epochs with learning rate $0.0089$.
In the default Monte Carlo setup we use $n_{\mathrm{samples}}=1000$ trajectories, all of which fit into a single optimization batch; thus, each epoch effectively processes the full training set.
When minibatching is used, the batch size is stated explicitly.

\subsection{Baselines}
\label{app:baseline_implementation}
\paragraph{Pure symbolic (SINDy-like).}
Sparse regression over the symbolic library with an $L^1$ penalty, trained using the same optimizer infrastructure as the hybrid methods to ensure comparability. 
The canonical SINDy baseline uses sequential thresholded least squares (STLSQ).
\paragraph{$L^2$-regularized hybrid.}
Joint optimization over symbolic coefficients $w$ and neural parameters $\vartheta$ with loss

\begin{equation*}
\mathcal{L}
=
\mathcal{L}_{\mathrm{fit}}(w,\vartheta)
+\lambda_1\|w\|_1
+\lambda_2\|\hat f_{\mathrm{aug}}\|_{\mathcal{D}}^2.
\end{equation*}

\paragraph{OrthoReg.}
Identical to the $L^2$ hybrid baseline except that the neural regularizer is replaced by
\begin{equation*}
\lambda \sum_{j=1}^M
\langle \hat f_{\mathrm{aug}}, \phi_j \rangle_{\mathcal{D}}^2.
\end{equation*}

\begin{table}[h]
\centering
\small
\setlength{\tabcolsep}{2.5pt}
\caption{PySINDy STLSQ versus Pure (Adam $L^1$) and OrthoReg (\cref{tab:sweet_spot_results,tab:duffing_results_streamlined,tab:lv_results_streamlined,tab:sir_results_streamlined}). $\dagger$: divergent rollout or clamp.}
\label{tab:stlsq_comparison}
\begin{tabular}{l c c c}
\toprule
\textbf{Metric} & \textbf{STLSQ} & \textbf{Pure} & \textbf{OrthoReg} \\
\midrule
\multicolumn{4}{c}{\textbf{Pendulum} (\cref{eq:pend})} \\
\midrule
MSE(${\scriptstyle \dot{x}_\mathrm{ID},\ \mathrm{e}{-2}}$) & $\bm{1.17\ {\scalebox{0.8}{$\pm 0.00$}}}$ & $1.27\ {\scalebox{0.8}{$\pm 0.07$}}$ & $1.89\ {\scalebox{0.8}{$\pm 0.01$}}$ \\
MSE(${\scriptstyle x_\mathrm{OOD,T2},\ \mathrm{e}{0}}$) & $\dagger$ & $1.02\ {\scalebox{0.8}{$\pm 0.07$}}$ & $\bm{1.01\ {\scalebox{0.8}{$\pm 0.01$}}}$ \\
F1 & $0.70\ {\scalebox{0.8}{$\pm 0.00$}}$ & $0.43\ {\scalebox{0.8}{$\pm 0.09$}}$ & $\bm{0.93\ {\scalebox{0.8}{$\pm 0.15$}}}$ \\
\#Terms & $9.0\ {\scalebox{0.8}{$\pm 0.0$}}$ & $11.4\ {\scalebox{0.8}{$\pm 2.9$}}$ & $\bm{3.6\ {\scalebox{0.8}{$\pm 1.3$}}}$ \\
\midrule
\multicolumn{4}{c}{\textbf{Duffing} (\cref{app:subsec_duffing})} \\
\midrule
MSE(${\scriptstyle x_\mathrm{ID},\ \mathrm{e}{-1}}$) & $\dagger$ & $\bm{4.75\ {\scalebox{0.8}{$\pm 0.13$}}}$ & $5.59\ {\scalebox{0.8}{$\pm 0.09$}}$ \\
MSE(${\scriptstyle x_\mathrm{OOD,T2},\ \mathrm{e}{0}}$) & $\dagger$ & $11.03\ {\scalebox{0.8}{$\pm 6.09$}}$ & $\bm{4.93\ {\scalebox{0.8}{$\pm 0.35$}}}$ \\
F1 & $0.62\ {\scalebox{0.8}{$\pm 0.00$}}$ & $0.40\ {\scalebox{0.8}{$\pm 0.05$}}$ & $\bm{0.63\ {\scalebox{0.8}{$\pm 0.05$}}}$ \\
\#Terms & $7.0\ {\scalebox{0.8}{$\pm 0.0$}}$ & $11.0\ {\scalebox{0.8}{$\pm 1.6$}}$ & $\bm{3.4\ {\scalebox{0.8}{$\pm 0.5$}}}$ \\
\midrule
\multicolumn{4}{c}{\textbf{Lotka--Volterra} (\cref{app:subsec_lv})} \\
\midrule
MSE(${\scriptstyle \dot{x}_\mathrm{OOD,T2},\ \mathrm{e}{-2}}$) & $41.8\ {\scalebox{0.8}{$\pm 0.0$}}$ & $\bm{1.21\ {\scalebox{0.8}{$\pm 0.00$}}}$ & $\bm{1.22\ {\scalebox{0.8}{$\pm 0.00$}}}$ \\
F1 & $\bm{0.30\ {\scalebox{0.8}{$\pm 0.00$}}}$ & $0.22\ {\scalebox{0.8}{$\pm 0.01$}}$ & $0.24\ {\scalebox{0.8}{$\pm 0.01$}}$ \\
\midrule
\multicolumn{4}{c}{\textbf{SIR} (\cref{app:subsec_sir})} \\
\midrule
MSE(${\scriptstyle \dot{x}_\mathrm{OOD,T2},\ \mathrm{e}{-4}}$) & $233\ {\scalebox{0.8}{$\pm 0.00$}}$ & $\bm{2.40\ {\scalebox{0.8}{$\pm 0.11$}}}$ & $16.54\ {\scalebox{0.8}{$\pm 0.80$}}$ \\
F1 & $0.18\ {\scalebox{0.8}{$\pm 0.00$}}$ & $0.14\ {\scalebox{0.8}{$\pm 0.03$}}$ & $\bm{0.20\ {\scalebox{0.8}{$\pm 0.06$}}}$ \\
\bottomrule
\end{tabular}
\begin{flushleft}
\footnotesize
\end{flushleft}
\end{table}

\paragraph{PINN baseline.}
A physics-informed neural network that encodes known components of the pendulum dynamics as a soft constraint, while leaving unmodeled residual dynamics unconstrained.

\paragraph{Universal ODE.}
A fully neural model without a symbolic component; uses the same depth and activations as the hybrid networks.

\paragraph{Canonical PySINDy STLSQ (selection--refit baseline).}
\label{app:stlsq_baseline}

The Pure Symbolic rows in the main tables (\cref{tab:sweet_spot_results,tab:duffing_results_streamlined,tab:lv_results_streamlined,tab:sir_results_streamlined}) use the same continuous $L^1$+Adam sparse-regression pipeline as the hybrid models (\cref{app:baseline_implementation}), which gives a controlled comparison within one optimization stack.
As a canonical SINDy-style selection--refit reference, we additionally run PySINDy \texttt{STLSQ} on the same symbolic libraries and train/ID/OOD splits:
\texttt{SequentialThresholdedLeastSquares(threshold=0.045, normalize\_columns=False)} with second-order finite differences (\texttt{FiniteDifference(order=2)}).
For pendulum $\dot{x}_{\mathrm{OOD,T3}}$ under STLSQ, we fix the support selected on train and refit the active coefficients by ordinary least squares on the OOD-T3 trajectories before scoring.
Where trajectory rollouts diverged or hit the numerical clamp, state MSE is marked $\dagger$; STLSQ standard deviations are omitted when identical across all five seeds (deterministic splits and bitwise-identical fits).

The comparison underscores why OrthoReg targets partial library misspecification rather than purely symbolic reconstruction. 
STLSQ reduces shrinkage bias and improves support recovery relative to the continuous \(L^1\)+Adam Pure baseline on the pendulum and Duffing systems. 
However, because STLSQ has no residual component, out-of-library dynamics must be approximated within the fixed dictionary, which can lead to unstable rollouts even when derivative-fit or F1 diagnostics look reasonable. 
OrthoReg instead couples sparse symbolic recovery with a flexible residual discouraged from using library directions, yielding stronger pendulum recovery and finite hybrid rollouts in the regimes above.
The LV and SIR results show the complementary limitation: when the library mismatch is severe or the evaluation is dominated by time-dependent perturbations, F1 alone is not a reliable summary of dynamical performance.

\subsection{Per-system reproducibility values}
\label{app:repro_values}

For each system we list (i) the ground-truth parameters used for training trajectories, (ii) the perturbation used to generate the OOD-T3 split, and (iii) the simulation horizon and step size. 
Final regularization hyperparameters are shared across systems and listed at the end of the block; 
the same values are used for both the $L^2$ baseline and OrthoReg, applied to the corresponding regularization term.

\begin{itemize}[leftmargin=*]
\item \textbf{Modified damped pendulum} (\cref{eq:pend}, dataset \texttt{theoretical\_pendulum.yaml}):
  \begin{itemize}
  \item parameters $\omega_0=1.0$, $\alpha=0.2$, $(\beta_1,\beta_2,\beta_3)=(0.3,0.25,0.15)$;
  \item OOD-T3 perturbation: all parameters scaled by $1.2$;
  \item $T=6.0$, $n_T=100$, $\Delta t\approx 0.0606$.
  \end{itemize}
\item \textbf{Duffing oscillator} (\cref{app:subsec_duffing}, dataset \texttt{duffing.yaml}):
  \begin{itemize}
  \item parameters $a=-0.5$, $b=-1.0$, $c=0.1$ (cubic term $cx^3$ absent from the dictionary);
  \item OOD-T3 perturbation: $(a,b,c)\mapsto(1.2\,a,\,1.2\,b,\,2.0\,c)$;
  \item $T=40$, $\Delta t=0.01$.
  \end{itemize}
\item \textbf{Lotka--Volterra} (\cref{app:subsec_lv}, dataset \texttt{complex\_orthogonal\_lv.yaml}):
  \begin{itemize}
  \item base parameters $\alpha,\beta,\gamma,\delta$ as in \texttt{lv.py} (Blasius-style scaled rates $0.1\!\cdot\!12,\,0.005\!\cdot\!12,\,0.04\!\cdot\!12,\,0.00004\!\cdot\!12$); missing-dynamics weights $(\varepsilon_1,\varepsilon_2)=(3.0,2.0)$ at the reported difficulty $2.0$;
  \item OOD-T3 perturbation: parameter ranges $(0.2,0.3)\!\cdot\!12$, $(0.01,0.015)\!\cdot\!12$, $(0.08,0.12)\!\cdot\!12$, $(0.00008,0.00012)\!\cdot\!12$ for $(\alpha,\beta,\gamma,\delta)$ respectively, with initial conditions $(N_0,P_0)\in(100,200)\times(10,20)$;
  \item $T=10$, $\Delta t=0.1$.
  \end{itemize}
\item \textbf{Tweaked SIR} (\cref{app:subsec_sir}, dataset \texttt{tweaked\_sir.yaml}):
  \begin{itemize}
  \item base parameters $\beta=2.0$, $\gamma=1.0$; tweaked-term parameters $(\varepsilon_1,\varepsilon_2,\varepsilon_3)=(0.3,0.25,0.2)$, $\omega_{\rm fast}=3.0$, $\omega_{\rm slow}=1.5$, $\tau_{\rm memory}=2.0$;
  \item OOD-T3 perturbation: $\beta\in(8,12)$, $\gamma\in(0.8,1.2)$ (parameter rescaling applied to the standard SIR backbone);
  \item $T=10$, $\Delta t=0.1$.
  \end{itemize}
\item \textbf{Final regularization hyperparameters} (selected per \cref{app:hyperparam_selection} and shared across systems):
  \begin{itemize}
  \item $L^2$ baseline: $\lambda_2=0.005$ (on $\|\hat f_{\rm aug}\|_{\mathcal D}^2$), symbolic sparsity $\mu=0.001$;
  \item OrthoReg: $\lambda=0.005$ (on the inner-product penalty), symbolic sparsity $\mu=0.003$.
  \end{itemize}
\end{itemize}

\subsection{Regularization selection}
\label{app:hyperparam_selection}

We select regularisation strengths $(\lambda,\mu)$ for each method by a small validation sweep, using prediction loss on a held-out in-distribution trajectory split as the selection criterion. 
Symbolic recovery is tracked over the same sweep but reported only as a diagnostic. 
The selected $(\lambda,\mu)$ are fixed across in-distribution and out-of-distribution evaluations; the same protocol is applied to the $L^2$ baseline with its own grid.

\paragraph{Penalty scale dependence on library norms.}
The orthogonality penalty $\lambda\sum_j\langle\hat f_{\rm aug},\phi_j\rangle_{\mathcal D}^2$ is scale-dependent in the library: 
rescaling $\phi_j\to c_j\phi_j$ rescales the per-feature contribution by $c_j^2$. 
The reported runs operate on bounded state ranges where the empirical norms $\|\phi_j\|_{\mathcal D}$ are comparable across $j$; 
for libraries with very different per-feature scales, standardising features to unit empirical norm (and rescaling $\lambda$ accordingly) is a robust drop-in.

\section{LLM Usage Disclosure}

Large language models were used as writing and programming assistants throughout the project, including for text polishing, code development, and improving the presentation of mathematical arguments. 
The research idea, theoretical results, experimental design, analyses, figures, and manuscript were produced by the authors.


\section{Limitations}
\label{app:limitations}

\paragraph{Scope of the guarantee.}
Our guarantees apply to the additive decomposition $f=f_{\mathrm{phy}}+f_{\mathrm{aug}}$ under the empirical inner product used for training. 
They therefore do not directly cover compositional architectures such as $\hat f=T_w+N_\vartheta\circ T_w$, unconstrained residual-learning pipelines, or population correlations of the learned augmentation without an additional uniform-convergence argument (\cref{prop:seq_two_stage_failure,app:sequential_extension}). 
A full population-level theory is left for future work.

\paragraph{Empirical-to-population transfer.}
The penalty bound (\cref{thm:penalty_bound_main}) and the error decomposition (\cref{thm:error_decomp}) are stated for the empirical inner product $\langle\cdot,\cdot\rangle_{\mathcal D}$. 
A bound on the population correlation $\langle\hat f_{\mathrm{aug}},\phi_j\rangle_\nu$ for the \emph{learned} augmentation cannot be obtained by single-function concentration alone (the function depends on $\mathcal D$); 
it would require uniform convergence over the augmentation hypothesis class (e.g.\ Rademacher complexity or VC-style arguments). 
A uniform-convergence statement is left to future work.

\paragraph{Symbolic-discovery regime.}
OrthoReg is formulated for a fixed, finite, differentiable library and a continuous sparsity relaxation optimised jointly with the neural augmentation. 
Selection--refit methods such as STLSQ, $L^0$ search, genetic programming, and transformer-based symbolic search operate in a different regime: 
after support selection, coefficients are typically refit by unregularised least squares, avoiding shrinkage bias on the retained terms. 
Combining such solvers with orthogonality-regularised residual learning is a natural extension, but it is not covered by the joint-objective certificate analysed here.

\paragraph{Dictionary-dependent interpretability and OOD scope.}
The recovered symbolic support is mechanistically meaningful only when the chosen library spans a scientifically relevant component of the true vector field. 
\Cref{thm:error_decomp} separates in-library and orthogonal-complement error under the empirical inner product; 
consequently, OrthoReg is most useful when missing dynamics are well separated from the library span, and less beneficial when the residual is strongly correlated with library functions on the observed distribution. 
OrthoReg therefore provides a complementary empirical decomposition, not recovery of governing laws absent from the library.

\paragraph{Derivative supervision and noise.}
The experiments use vector-field regression with derivative targets estimated from smoothed finite differences. 
This isolates the decomposition problem but makes high-noise settings sensitive to derivative-estimation error, as seen in \cref{app:noise_robustness}; 
in particular, at $\varepsilon\gtrsim 0.1$ the smoothed-FD targets (Savitzky--Golay window of length~5) themselves limit recoverability. 
State-prediction losses (\cref{eq:fit_loss_step}) or noise-aware differentiation are natural extensions for sensor-level data.

\paragraph{Implementation.}
The reported implementation optimises the continuous $L^1$-regularised objective with Adam and obtains exact sparsity by post-hoc thresholding. 
The orthogonality penalty is scale-dependent in the library features, so feature standardisation is advisable for libraries with very different per-feature scales (\cref{app:hyperparam_selection}). 
The Pure-Symbolic rows use the same continuous sparsity pipeline as the hybrid models for controlled comparison; 
STLSQ-style pure-symbolic baselines are discussed in \cref{app:baseline_implementation}.


\end{document}